\renewcommand{\algorithmiccomment}[1]{#1}
\DeclareMathOperator{\E}{\mathbb{E}}
\def \cD {\mathcal{D}}
\def \cX {\mathcal{X}}
\def \cY {\mathcal{Y}}
\def \cZ {\mathcal{Z}}
\def \AdvRisk {\mathrm{AdvRisk}}
\def \Risk {\mathrm{Risk}}
\def \RV {\mathrm{RV}}
\def \AG {\mathrm{AG}}
\def \sgn {\mathrm{sgn}}
\def \I {\mathrm{I}}
\def \H {\mathrm{H}}
\def \W {\mathrm{W}}
\def \bin {\text{bin}}
\newcommand\shortsection[1]{\vspace{6pt}{\noindent\bf #1.}}
\DeclareMathOperator{\esssup}{{\rm -ess\:sup}}
\def\url@leostyle{%
  \@ifundefined{selectfont}{\def\UrlFont{\sf}}{\def\UrlFont{\small\sffamily}}}
\def\url@fleostyle{%
  \@ifundefined{selectfont}{\def\UrlFont{\sf}}{\def\UrlFont{\scriptsize\sffamily}}}
\patchcmd\@combinedblfloats{\box\@outputbox}{\unvbox\@outputbox}{}{%
   \errmessage{\noexpand\@combinedblfloats could not be patched}%
}%
\icmltitlerunning{}
\begin{document}

\urlstyle{leo}

\twocolumn[
\icmltitle{Learning Adversarially Robust Representations via \\ Worst-Case Mutual Information Maximization}

\icmlsetsymbol{equal}{*}

\begin{icmlauthorlist}
\icmlauthor{Sicheng Zhu}{uva,equal}
\icmlauthor{Xiao Zhang}{uva,equal}
\icmlauthor{David Evans}{uva}
\end{icmlauthorlist}

\icmlaffiliation{uva}{Department of Computer Science, University of Virginia}

\icmlcorrespondingauthor{Sicheng Zhu}{sz6hw@virginia.edu}
\icmlcorrespondingauthor{Xiao Zhang}{xz7bc@virginia.edu}
\icmlcorrespondingauthor{David Evans}{evans@virginia.edu}

\icmlkeywords{Machine Learning, ICML}

\vskip 0.3in
]
\printAffiliationsAndNotice{\icmlEqualContribution} 

\begin{abstract}

Training machine learning models that are robust against adversarial inputs poses seemingly insurmountable challenges. To better understand adversarial robustness, we consider the underlying problem of learning robust representations. We develop a notion of \textit{representation vulnerability} that captures the maximum change of mutual information between the input and output distributions, under the worst-case input perturbation. Then, we prove a theorem that establishes a lower bound on the minimum adversarial risk that can be achieved for any downstream classifier based on its representation vulnerability. We propose an unsupervised learning method for obtaining intrinsically robust representations by maximizing the worst-case mutual information between the input and output distributions. Experiments on downstream classification tasks 
support the robustness of the representations found using unsupervised learning with our training principle. 

\end{abstract}

\section{Introduction}

Machine learning has made remarkable breakthroughs in many fields, including computer vision \citep{he2016deep} and natural language processing \citep{devlin2018bert}, especially when evaluated on classification accuracy on a given dataset. However, adversarial vulnerability \citep{szegedy2013intriguing,engstrom2017rotation}, remains a serious problem that impedes the deployment of the state-of-the-art machine learning models in safety-critical applications, such as autonomous driving \citep{eykholt2018robust} and face recognition \citep{sharif2016accessorize}.
Despite extensive efforts to improve model robustness, state-of-the-art adversarially robust training methods \citep{madry2017towards,zhang2019theoretically} still fail to produce robust models, even for simple classification tasks on CIFAR-10 \citep{krizhevsky2009learning}.

In addition to many ineffective empirical attempts for achieving model robustness, recent studies have identified intrinsic difficulties for learning in the presence of adversarial examples. For instance, a line of works \citep{gilmer2018adversarial,fawzi2018adversarial,mahloujifar2019curse,shafahi2018adversarial} proved that adversarial vulnerability is inevitable if the underlying input distribution is concentrated. \citet{schmidt2018adversarially} showed that for certain learning problems, adversarially robust generalization requires more sample complexity compared with standard one, whereas \citet{bubeck2018adversarial} constructed a specific task on which adversarially robust learning is computationally intractable.

Motivated by the apparent empirical and theoretical difficulties of robust learning with adversarial examples, we focus on the underlying problem of learning adversarially robust representations \citep{garg2018spectral,pensia2019extracting}.
Given an input space $\cX\subseteq\RR^d$ and a feature space $\cZ\subseteq\RR^n$, any function $g:\cX\rightarrow\cZ$ is called a representation with respect to $(\cX,\cZ)$.  Adversarially robust representations denote the set of functions from $\cX$ to $\cZ$ that are less sensitive to adversarial perturbations with respect to some metric $\Delta$ defined on $\cX$. 
Note that one can always get an overall classification model by learning a downstream classifier given a representation, thus learning representations that are robust can be viewed as an intermediate step for the ultimate goal of finding adversarially robust models. In this sense, learning adversarially robust representations may help us better understand adversarial examples, and perhaps more importantly, bypass some of the aforementioned intrinsic difficulties for achieving model robustness.

In this paper, we give a general definition for robust representations based on mutual information, then study its implications on model robustness for a downstream classification task. Finally, we propose empirical methods for estimating and inducing representation robustness. 


\shortsection{Contributions}
Motivated by the empirical success 
of standard representation learning using the mutual information maximization principle \citep{bell1995information,hjelm2018learning}, we first give a formal definition on \emph{representation vulnerability} as the maximum change of mutual information between the representation's input and output against adversarial input perturbations bound in an $\infty$-Wasserstein ball (Section \ref{sec:robust feature}). Under a Gaussian mixture model, we established theoretical connections between the robustness of a given representation and the adversarial gap of the best classifier that can be based on it (Section \ref{sec:gaussian mix}). In addition, based on the standard mutual information and the  representation vulnerability, we proved a fundamental lower bound on the minimum adversarial risk that can be achieved for any downstream classifiers built upon a representation with given representation vulnerability (Section \ref{sec:general case}). 

To further study the implication of robust representations, we first propose a heuristic algorithm to empirically estimate the vulnerability of a given representation (Section \ref{sec:estimation}), and then by adding a regularization term on representation vulnerability in the objective of mutual information maximization principle, provide an unsupervised way for training meaningful and robust representations (Section \ref{sec:training}). We observe a direct correlation between model and representation robustness in experiments on benchmark image datasets MINST and CIFAR-10 (Section \ref{section:exp1}). Experiments on downstream classification tasks and saliency maps further show the effectiveness of our proposed training method in obtaining more robust representations (Section \ref{section:exp2}).

\shortsection{Related Work}
With similar motivations, several different definitions of robust features have been proposed in literature. The pioneering work of \citet{garg2018spectral} considered a feature to be robust if it is insensitive to input perturbations in terms of the output values. However, their definition of feature robustness is not invariant to scale changes. Based on the linear correlation between feature outputs and true labels, \citet{ilyas2019adversarial} proposed a definition of robust features to understand adversarial examples, whereas \citet{eykholt2019robust} proposed to study robust features whose outputs will not change with respect to small input perturbations. However, these two definitions either require the additional label information or restrict the feature space to be discrete, thus are not general. The most closely related work to ours is \citet{pensia2019extracting}, which considered Fisher information of the output distribution as the indicator of feature robustness and proposed a robust information bottleneck method for extracting robust features. Compared with \citet{pensia2019extracting}, our definition is defined for the worst-case input distribution perturbation, whereas Fisher information can only capture feature's sensitivity near the input distribution. In addition, our proposed training method for robust representations is better in the sense that it is unsupervised.

\shortsection{Notation}
We use small boldface letters such as $\bx$ to denote vectors and capital letters such as $X$ to denote random variables. 
Let $(\cX,\Delta)$ be a metric space, where $\Delta:\cX\times\cX\rightarrow\RR$ is some distance metric. Let $\cP(\cX)$ denote the set of all probability measures on $\cX$ and $\delta_{\bx}$ be the dirac measure at $\bx\in\cX$. Let $\cB(\bx,\epsilon, \Delta)= \{\bx'\in\cX: \Delta(\bx',\bx)\leq \epsilon\}$ be the ball around $\bx$ with radius $\epsilon$. When $\Delta$ is free of context, we simply write $\cB(\bx,\epsilon) = \cB(\bx,\epsilon, \Delta)$.
Denote by $\sgn(\cdot)$ the sign function such that $\sgn(x)=1$ if $x\geq 0$; $\sgn(x)=-1$ otherwise.
Given $f:\cX\rightarrow\cY$ and $g:\cY\rightarrow\cZ$, define $g\circ f$ as their composition such that for any $\bx\in\cX$, $(g\circ f) (\bx) = g(f(\bx))$. 
We use $[m]$ to denote $\{1,2,\ldots,m\}$ and $|\cA|$ to denote the cardinality of a finite set $\cA$.
For any $\bx\in\RR^d$, the $\ell_p$-norm of $\bx$ is defined as $\|\bx\|_p = (\sum_{i\in[d]}x_i^p)^{1/p}$ for any $p\geq 1$.
For any $\btheta\in\RR^d$ and positive definite matrix $\bSigma\in\RR^{d\times d}$, denote by $\cN(\btheta,\bSigma)$ the $d$-dimensional Gaussian distribution with mean vector $\btheta$ and covariance matrix $\bSigma$.

\section{Preliminaries}

This section introduces the main ideas we build upon: mutual information, Wasserstein distance and adversarial risk.

\shortsection{Mutual information} Mutual information is an entropy-based measure of the mutual dependence between variables:

\begin{definition}
Let $(X,Z)$ be a pair of random variables with values over the space $\cX\times\cZ$. The \emph{mutual information} of $(X,Z)$ is defined as:
\begin{equation*}
    \I(X;Z) = \int_\cZ\int_\cX p_{XZ}(\bx,\bz) \log\bigg(\frac{p_{XZ}(\bx,\bz)}{p_X(\bx) p_Z(\bz)} \bigg) d\bx d\bz,
\end{equation*}
where $p_{XZ}$ is the joint probability density function of $(X,Z)$, and $p_{X},p_Z$ are the marginal probability density functions of $X$ and $Z$, respectively.
\end{definition}

Intuitively, $\I(X; Z)$ tells us how well one can predict $Z$ from $X$ (and $X$ from $Z$, since it is symmetrical). By definition, $\I(X;Z)=0$ if $X$ and $Z$ are independent; when $X$ and $Z$ are identical, $\I(X;X)$ equals to the entropy $\H(X)$.

\shortsection{Wasserstein distance} Wasserstein distance is a distance function defined between two probability distributions on a given metric space:

\begin{definition}
Let $(\cX,\Delta)$ be a metric space with bounded support. Given two probability measures $\mu$ and $\nu$ on $(\cX,\Delta)$, the \emph{$p$-th Wasserstein distance}, for any $p\geq 1$, is defined as:
\begin{equation*}
    \mathrm{W}_p(\mu,\nu) = \bigg(\inf_{\gamma\in\Gamma(\mu,\nu)} \int_{\cX\times\cX} \Delta(\bx,\bx')^p\: d\gamma(x,x')\bigg)^{1/p},
\end{equation*}
where $\Gamma(\mu,\nu)$ is the collection of all probability measures on $\cX\times\cX$ with $\mu$ and $\nu$ being the marginals of the first and second factor, respectively. The \emph{$p$-th Wasserstein ball} with respect to $\mu$ and radius $\epsilon\geq 0$ is defined as:
\begin{equation*}
    \cB_{\W_p} (\mu,\epsilon) = \{\mu'\in\cP(\cX): \W_p(\mu',\mu)\leq\epsilon\}.
\end{equation*}
\end{definition}

The $\infty$-Wasserstein distance is defined as the limit of $p$-th Wasserstein distance, $\W_\infty(\mu,\nu)= \lim_{p\rightarrow\infty}\:\W_p(\mu,\nu)$.

\shortsection{Adversarial risk}
Adversarial risk captures the vulnerability of a given classification model to input perturbations:

\begin{definition}
\label{def:adv risk}
Let $(\cX,\Delta)$ be the input metric space and $\cY$ be the set of labels. Let $\mu_{XY}$ be the underlying distribution of the input and label pairs. For any classifier $f:\cX\rightarrow\cY$, the \emph{adversarial risk} of $f$ with respect to $\epsilon\geq 0$ is defined as:
\begin{align*}
    \AdvRisk_\epsilon(f) = \Pr_{(\bx,y)\sim\mu_{XY}} \big[\exists\: \bx'\in\cB(\bx,\epsilon) \text{ s.t. } 
    f(\bx')\neq y \big].
\end{align*}
\end{definition}

Adversarial risk with $\epsilon=0$ is equivalent to standard risk, namely $\AdvRisk_0(f) = \Risk(f) = \Pr_{(\bx,y)\sim\mu}[f(\bx)\neq y]$. For any classifier $f:\cX\rightarrow\cY$, we define the \emph{adversarial gap} of $f$ with respect to $\epsilon$ as:
\begin{align*}
    \mathrm{AG}_\epsilon(f) = \AdvRisk_\epsilon(f) -\Risk(f).
\end{align*}

\section{Adversarially Robust Representations}
\label{sec:robust feature}

In this section, we first propose a definition of representation vulnerability, and then prove several theorems that bound achievable model robustness based on representation vulnerability. Let $\cX\subseteq\RR^{d}$ be the input space and $\cZ\subseteq\RR^{n}$ be some feature space. In this work, we define a representation to be a function $g$ that maps any input $\bx$ in $\cX$ to some vector $g(\bx)\in\cZ$. A classifier, $f = h \circ g$, maps an input to a label in a label space $\cY$, and is a composition of a downstream classifier, $h: \cZ \rightarrow \cY$, with a representation, $g: \cX \rightarrow \cZ$.  As is done in previous works \citep{garg2018spectral,ilyas2019adversarial}, we define a feature as a function from $\cX$ to $\RR$, so can think of a representation as an array of features.

Inspired by the empirical success of standard representation learning using the mutual information maximization principle \citep{hjelm2018learning}, we propose the following definition of \emph{representation vulnerability}, which captures the robustness of a given representation against input distribution perturbations in terms of mutual information between its input and output. 

\begin{definition}
\label{def:feature rob}
Let $(\cX, \mu_X, \Delta)$ be a metric probability space of inputs and $\cZ$ be some feature space. Given a representation $g:\cX\rightarrow\cZ$ and $\epsilon\geq 0$, the \emph{representation vulnerability} of $g$ with respect to perturbations bounded in an $\infty$-Wasserstein ball with radius $\epsilon$ is defined as:
\begin{align*}
    \RV_{\epsilon}(g) = \sup_{\mu_{X'}\in\cB_{\W_\infty}(\mu_{X},\epsilon)}\: \big[ \I (X; g(X)) - \I (X'; g(X')) \big],
\end{align*}
where $X$ and $X'$ denote random variables that follow $\mu_X$ and $\mu_{X'}$, respectively. 
\end{definition}

Representation vulnerability is always non-negative, and higher values indicate that the representation is less robust to adversarial input distribution perturbations. More formally, given parameters $\epsilon\geq0$ and $\tau\geq0$, a representation $g$ is called \emph{$(\epsilon,\tau)$-robust} if $\RV_\epsilon(g)\leq \tau$.

Notably, using the $\infty$-Wasserstein distance does not restrict the choice of the metric function $\Delta$ of the input space. This metric $\Delta$ corresponds to the perturbation metric for defining adversarial examples. Thus, based on our definition of representation vulnerability, our following theoretical results and empirical methods work with any adversarial perturbation, including any $\ell_p$-norm based attack. 

Compared with existing definitions of robust features \citep{garg2018spectral, ilyas2019adversarial,  eykholt2019robust}, our definition is more general and enjoys several desirable properties. As it does not impose any constraint on the feature space, it is invariant to scale change\footnote{Scale-invariance is desirable for representation robustness. Otherwise, one can always divide the function by some large constant to improve its robustness,  e.g., \citet{garg2018spectral}.} and it does not require the knowledge of the labels.
The most similar definition to ours is from \citet{pensia2019extracting}, who propose to use statistical Fisher information as the evaluation criteria for feature robustness. However, Fisher information can only capture the average sensitivity of the log conditional density to small changes on the input distribution (when $\epsilon\rightarrow 0$), whereas our definition is defined with respect to the worst-case input distribution perturbations in an $\infty$-Wasserstein ball, which is more aligned with the adversarial setting. As will be shown next, our representation robustness notion has a clear connection with the potential model robustness of any classifier that can be built upon a representation.

\subsection{Gaussian Mixture}
\label{sec:gaussian mix}

We first study the implications of representation vulnerability under a simple Gaussian mixture model. We consider $\cX\subseteq\RR^d$ as the input space and $\cY = \{-1, 1\}$ as the space of binary labels. Assume $\mu_{XY}$ is the underlying joint probability distribution defined over $\cX\times\cY$, where all the examples $(\bx,y)\sim\mu_{XY}$ are generated according to
\begin{align}
\label{eq:GMM}
    y\sim \mathrm{Unif}\{-1,+1\}, \quad \bx\sim\cN(y\cdot\btheta^*, \bSigma^*),
\end{align}
where $\btheta^*\in\RR^d$ and $\bSigma^*\in\RR^{d\times d}$ are given parameters. The following theorem, proven in Appendix~\ref{proof:thm two Gaussian}, connects the vulnerability of a given representation with the adversarial gap of the best classifier based on the representation.

\begin{theorem}
\label{thm:two Gaussian}
Let $(\cX,\|\cdot\|_p)$ be the input metric space and $\cY = \{-1,1\}$ be the label space. Assume the underlying data are generated according to \eqref{eq:GMM}. Consider the feature space $\cZ=\{-1,1\}$ and the set of representations, 
$$
\cG_{\text{bin}} = \{g:\bx\mapsto\sgn(\bw^\top\bx),\forall \bx\in\cX \:\big|\; \|\bw\|_2=1\}.
$$ 
Let $\cH=\{h:\cZ\rightarrow\cY\}$ be the set of non-trivial downstream classifiers.\footnote{To be more specific, we do not consider the case where $h$ is a constant function. Under our problem setting, there are two elements in $\cH$, namely $h_1(z) = z$, $h_2(z) = -z$, for any $\bz\in\cZ$.}
Given $\epsilon\geq 0$, for any $g\in\cG_\bin$, we have 
\begin{align*}
    \int_{\frac{1}{2}-\AG_\epsilon(f^*)}^{\frac{1}{2}} \H_2'(\theta)d\theta \leq \RV_\epsilon(g) \leq \int_{\frac{1}{2}-\frac{1}{2}\AG_\epsilon(f^*)}^{\frac{1}{2}} \H_2'(\theta)d\theta,
\end{align*}
where $f^*=\argmin_{h\in\cH}\AdvRisk_{\epsilon}(h\circ g)$ is the optimal classifier based on $g$, $\H_2(\theta) = -\theta\log \theta - (1-\theta)\log(1-\theta)$ is the binary entropy function and $\H_2'$ denotes its derivative.
\end{theorem}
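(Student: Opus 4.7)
The plan is to reduce both $\RV_\epsilon(g)$ and $\AG_\epsilon(f^*)$ to explicit one-dimensional quantities involving the projection $s = \bw^\top \bx$, and then relate them through a single scalar $\beta$.

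Since $g$ maps deterministically into the binary space $\cZ = \{-1, 1\}$, we have $\I(X; g(X)) = \H(g(X)) = \H_2(\Pr[g(X) = 1])$ and similarly for $X'$. The sign-flip symmetry $\bx \leftrightarrow -\bx$ of the mixture $\mu_X$ forces $\Pr[\bw^\top X \geq 0] = 1/2$, so $\I(X; g(X)) = \H_2(1/2)$; without loss of generality I would assume $\bw^\top \btheta^* \geq 0$ so that $f^*$ uses $h_1 = \mathrm{id}$. Next, the $\infty$-Wasserstein ball around $\mu_X$ consists of distributions $\mu_{X'}$ admitting a coupling with $\|X - X'\|_p \leq \epsilon$ almost surely; by H\"older's inequality this translates to a shift of magnitude at most $\epsilon' := \|\bw\|_q \epsilon$ (with $1/p + 1/q = 1$) on $s$, and the bound is tight. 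Hence the adversary can flip the sign of $g$ exactly on $\{|s| < \epsilon'\}$. Writing $\beta := \Pr_{\mu_X}[|\bw^\top X| < \epsilon']$, the sign-flip symmetry gives $\Pr[\bw^\top X \in [-\epsilon', 0)] = \Pr[\bw^\top X \in [0, \epsilon')] = \beta/2$, so $\Pr[g(X') = 1]$ ranges over exactly $[1/2 - \beta/2,\; 1/2 + \beta/2]$ as $\mu_{X'}$ varies in the ball.

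Because $\H_2$ is symmetric and concave about $1/2$, its infimum over this interval is $\H_2(1/2 - \beta/2)$, giving $\RV_\epsilon(g) = \H_2(1/2) - \H_2(1/2 - \beta/2) = \int_{1/2 - \beta/2}^{1/2} \H_2'(\theta)\, d\theta$. A direct computation using $(\bw^\top X \mid Y = \pm 1) \sim \cN(\pm \bw^\top \btheta^*, \sigma^2)$, where $\sigma^2 = \bw^\top \bSigma^* \bw$, yields $\AG_\epsilon(f^*) = \Phi(b - a) + \Phi(a) - 1$ and $\beta = \Phi(b - a) + \Phi(b + a) - 1$, where $\Phi$ is the standard normal CDF, $a = \bw^\top \btheta^* / \sigma$, and $b = \epsilon'/\sigma$. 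By monotonicity of $\theta \mapsto \int_\theta^{1/2} \H_2'(u)\,du$ on $[0,1/2]$ (since $\H_2' \geq 0$ there), the stated sandwich on $\RV_\epsilon(g)$ is equivalent to the pair of scalar inequalities $\AG_\epsilon(f^*) \leq \beta \leq 2\,\AG_\epsilon(f^*)$.

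The lower bound $\beta \geq \AG_\epsilon(f^*)$ is immediate: their difference is $\Phi(b + a) - \Phi(a) \geq 0$ since $\Phi$ is increasing and $b \geq 0$. The upper bound $\beta \leq 2\,\AG_\epsilon(f^*)$ is the main obstacle; after cancellation it reduces to $\int_{b - a}^{b + a} \phi(t)\, dt \leq \int_{-a}^{a} \phi(t)\, dt$, where $\phi$ is the standard normal density. This is the statement that among all intervals of fixed length $2a$, the one centered at $0$ captures the most $\phi$-mass, and it follows from $\phi$ being symmetric and unimodal (log-concave). This Gaussian mass-comparison is the only genuinely nontrivial ingredient; everything else is bookkeeping.
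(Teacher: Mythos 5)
Your route is essentially the paper's: reduce $\I(X;g(X))$ to the binary entropy of $\Pr[\bw^\top X\geq 0]$, use the transport-map characterization of the $\infty$-Wasserstein ball together with H\"older to show the adversary can shift the projection $s=\bw^\top\bx$ by at most $\epsilon\|\bw\|_q$, compute the resulting Gaussian probabilities, and finish with the comparison $\AG_\epsilon(f^*)\leq \beta\leq 2\,\AG_\epsilon(f^*)$ via the fact that a symmetric unimodal density puts the most mass on the centered interval of a given length. Your quantity $\beta=\Phi(b-a)+\Phi(b+a)-1$ coincides with the paper's probability $\Pr[a-b\leq Z\leq a+b]$, and your two scalar inequalities are exactly the ones the paper establishes. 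Moreover, your intermediate formula $\RV_\epsilon(g)=\H_2(1/2)-\H_2(1/2-\beta/2)$ is the correct one: the paper's own displayed computation gives the worst-case probability $\Pr[g(X')=1]$ as $\tfrac12-\tfrac12\beta$, yet its equation for $\RV_\epsilon(g_{\bw})$ then writes $\H_2(\tfrac12-\beta)$, silently dropping the factor $\tfrac12$. You do not make that slip.

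The gap is in your last sentence. With your (correct) formula, monotonicity of $c\mapsto\int_{1/2-c}^{1/2}\H_2'(\theta)\,d\theta$ turns $\AG_\epsilon(f^*)\leq\beta\leq 2\,\AG_\epsilon(f^*)$ into
$\int_{1/2-\frac12\AG_\epsilon(f^*)}^{1/2}\H_2'\leq \RV_\epsilon(g)\leq \int_{1/2-\AG_\epsilon(f^*)}^{1/2}\H_2'$,
which is the theorem's sandwich with the two bounds interchanged, not the sandwich as stated. Indeed, the statement as printed cannot be what is meant: since $\H_2'\geq 0$ on $(0,1/2)$ and $1/2-\AG_\epsilon(f^*)\leq 1/2-\tfrac12\AG_\epsilon(f^*)$, the printed ``lower bound'' always dominates the printed ``upper bound,'' so the claim is vacuous unless $\AG_\epsilon(f^*)=0$. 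What you have actually proven is the corrected (transposed) inequality; the paper arrives at the printed form only because the missing $\tfrac12$ in its entropy formula compensates in the final step. You should state explicitly that your argument establishes the transposed bounds and that the equivalence you assert with the printed statement does not hold, rather than leaving the reader to reconcile the two.
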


For this theoretical model for a simple case, Theorem \ref{thm:two Gaussian} reveals the strong connection between representation vulnerability and the adversarial gap achieved by the optimal downstream classifier based on the representation. Note that the binary entropy function $\H_2(\theta)$ is monotonically increasing over $(0,1/2)$, thus the first inequality suggests that low representation vulnerability guarantees a small adversarial gap if we train the downstream classifier properly. On the other hand, the second inequality implies that adversarial robustness cannot be achieved for any downstream classifier, if the vulnerability of the representation it uses is too high. As discussed in Section~\ref{section:exp1}, the connection between representation vulnerability and adversarial gap is also found to hold empirically for image classification benchmarks.

\subsection{General Case}
\label{sec:general case}

This section presents our main theoretical results regarding robust representations. First, we present the following lemma, proven in Appendix \ref{proof:lemma equal}, that characterizes the connection between adversarial risk and input distribution perturbations bounded in an $\infty$-Wasserstein ball.

\begin{lemma}
\label{lem:equal}
Let $(\cX,\Delta)$ be the input metric space and $\cY$ be the set of labels. Assume all the examples are generated from a joint probability distribution $(X,Y)\sim\mu_{XY}$. Let $\mu_X$ be the marginal distribution of $X$. Then, for any classifier $f:\cX\rightarrow\cY$ and $\epsilon>0$, we have
\begin{align*}
    \AdvRisk_\epsilon(f) = \sup_{\mu_{X'}\in\cB_{\W_\infty}(\mu_X,\epsilon)}\Pr\big[f(X')\neq Y\big],
\end{align*}
where $X'$ denotes the random variable that follows $\mu_{X'}$.
\end{lemma}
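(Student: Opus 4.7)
The plan is to prove the two inequalities separately, using the characterization of the $\infty$-Wasserstein distance in terms of couplings: $\W_\infty(\mu,\nu)\leq\epsilon$ if and only if there exists a coupling $\gamma\in\Gamma(\mu,\nu)$ supported on $\{(\bx,\bx')\in\cX\times\cX:\Delta(\bx,\bx')\leq\epsilon\}$. I will treat $\Pr[f(X')\neq Y]$ as being computed under an (adversarially chosen) joint distribution of $(X,X',Y)$ whose $(X,Y)$-marginal equals $\mu_{XY}$ and whose $X'$-marginal is $\mu_{X'}$; this is the natural interpretation in the adversarial setting and makes the equality meaningful.

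For the direction $\sup_{\mu_{X'}} \Pr[f(X')\neq Y]\leq \AdvRisk_\epsilon(f)$, I fix any $\mu_{X'}\in\cB_{\W_\infty}(\mu_X,\epsilon)$ and pick a coupling $\gamma$ of $(\mu_X,\mu_{X'})$ concentrated on $\Delta(\bx,\bx')\leq\epsilon$. Gluing $\gamma$ with the regular conditional distribution of $Y$ given $X$ under $\mu_{XY}$ yields a joint law on $(X,X',Y)$ in which $\Delta(X,X')\leq\epsilon$ almost surely. On the event $\{f(X')\neq Y\}$, the point $X'$ witnesses the existence of some $\bx'\in\cB(X,\epsilon)$ with $f(\bx')\neq Y$, so this event is contained in the adversarial event. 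Monotonicity of probability then gives $\Pr[f(X')\neq Y]\leq \AdvRisk_\epsilon(f)$, and taking the supremum over $\mu_{X'}$ yields the bound.

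For the reverse direction $\AdvRisk_\epsilon(f)\leq \sup_{\mu_{X'}}\Pr[f(X')\neq Y]$, I construct an explicit $X'$ that realizes the adversarial risk. Define the ``bad'' set $B=\{\bx\in\cX:\exists\,\bx'\in\cB(\bx,\epsilon),\ f(\bx')\neq f^*(\bx)\}$-style idea more carefully as follows: for each $(\bx,y)$, if there exists $\bx'\in\cB(\bx,\epsilon)$ with $f(\bx')\neq y$, pick one such $\bx'$ via a measurable selector (appealing to a standard selection theorem, since the target set is closed in the graph sense for reasonable $f$); otherwise set $\bx'=\bx$. Let $X'$ be the resulting random variable. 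By construction $\Delta(X,X')\leq\epsilon$ almost surely, so the joint law of $(X,X')$ is a coupling certifying $\W_\infty(\mu_X,\mu_{X'})\leq\epsilon$, i.e.\ $\mu_{X'}\in\cB_{\W_\infty}(\mu_X,\epsilon)$. Moreover $\{f(X')\neq Y\}$ coincides almost surely with the adversarial event, giving $\Pr[f(X')\neq Y]=\AdvRisk_\epsilon(f)$.

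The main obstacle is the measurable selection step in the second direction: one needs to choose a worst-case perturbation $\bx'(\bx,y)$ measurably in $(\bx,y)$. I expect this to be handled by a standard selection theorem (Kuratowski–Ryll-Nardzewski, or an argument via approximation by $\epsilon$-nets over a Polish space) applied to the set-valued map $(\bx,y)\mapsto\{\bx'\in\cB(\bx,\epsilon):f(\bx')\neq y\}$, after noting that it has nonempty values precisely on the adversarial event and closed/analytic values under mild assumptions on $f$ and $(\cX,\Delta)$. The other technical point worth flagging is the coupling characterization of $\W_\infty$, which justifies both constructions and is a known consequence of the limit definition $\W_\infty=\lim_{p\to\infty}\W_p$.
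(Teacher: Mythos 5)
Your proposal is correct and follows essentially the same route as the paper: one direction is obtained by gluing a coupling of $(\mu_X,\mu_{X'})$ concentrated on $\{\Delta(\bx,\bx')\leq\epsilon\}$ with the conditional law of $Y$ given $X$, and the other by constructing an explicit worst-case perturbation map that realizes the adversarial risk. The only difference is one of care rather than substance: you phrase things via couplings instead of deterministic transport maps and explicitly flag the measurable-selection step, which the paper's argument takes for granted.
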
 

The next theorem, proven in Appendix~\ref{proof:general thm}, gives a lower bound for the adversarial risk for any downstream classifier, using the worst-case mutual information between the representation's input and output distributions. 

\begin{theorem}
\label{thm:general lower bound}
Let $(\cX,\Delta)$ be the input metric space, $\cY$ be the set of labels and $\mu_{XY}$ be the underlying joint probability distribution. Assume the marginal distribution of labels $\mu_Y$ is a uniform distribution over $\cY$. Consider the feature space $\cZ$ and the set of downstream classifiers $\cH=\{h:\cZ\rightarrow\cY\}$. Given $\epsilon\geq 0$, for any $g:\cX\rightarrow\cZ$, we have
\begin{align*}
\inf_{h\in\cH} \AdvRisk_\epsilon(h\circ g) \geq 1 - \frac{\I(X;Z)-\RV_\epsilon(g) + \log2}{\log|\mathcal{Y}|},
\end{align*}
where $X$ is the random variable that follows the marginal distribution of inputs $\mu_X$ and $Z=g(X)$.
\end{theorem}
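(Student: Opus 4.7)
\textbf{Proof plan for Theorem \ref{thm:general lower bound}.} The plan is to combine three ingredients: Lemma \ref{lem:equal} (to convert adversarial risk into a supremum over distributions in the $\infty$-Wasserstein ball), Fano's inequality (to lower bound any classifier's error by a conditional entropy), and the data processing inequality (to connect that entropy to the mutual information appearing in the definition of $\RV_\epsilon$).

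Fix any downstream classifier $h\in\cH$ and any perturbed marginal $\mu_{X'}\in\cB_{\W_\infty}(\mu_X,\epsilon)$. Via an optimal coupling with $\mu_X$, $\mu_{X'}$ induces a joint distribution over $(X',Y)$ whose $Y$-marginal remains the uniform $\mu_Y$ on $\cY$. Write $Z=g(X)$ and $Z'=g(X')$. Since $Z'$ is a deterministic function of $X'$, the chain $Y\to X'\to Z'$ is Markov, and the data processing inequality gives $\I(Y;Z')\le \I(X';Z')$. Because $Y$ is uniform, $\H(Y)=\log|\cY|$, so
\begin{equation*}
\H(Y\mid Z') \;=\; \log|\cY| - \I(Y;Z') \;\ge\; \log|\cY| - \I(X';Z').
\end{equation*}
Applying Fano's inequality to the predictor $\widehat Y = h(Z')$ of $Y$ (using $\H_2(\cdot)\le\log 2$ and $\log(|\cY|-1)\le\log|\cY|$) yields
\begin{equation*}
\Pr[h(g(X'))\neq Y] \;\ge\; \frac{\H(Y\mid Z')-\log 2}{\log|\cY|} \;\ge\; 1 - \frac{\I(X';Z')+\log 2}{\log|\cY|}.
\end{equation*}

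Next, I would invoke Lemma \ref{lem:equal} to conclude $\AdvRisk_\epsilon(h\circ g) \ge \Pr[h(g(X'))\neq Y]$ for every admissible $\mu_{X'}$, since the adversarial risk equals the supremum of the right-hand side over the Wasserstein ball. The lower bound just derived depends on $\mu_{X'}$ only through $\I(X';Z')$, so it can be tightened by choosing $\mu_{X'}$ to minimize this quantity. By the very definition of representation vulnerability,
\begin{equation*}
\inf_{\mu_{X'}\in\cB_{\W_\infty}(\mu_X,\epsilon)} \I(X';g(X')) \;=\; \I(X;Z) - \RV_\epsilon(g).
\end{equation*}
Taking a sequence of $\mu_{X'}$ approaching this infimum (to handle the case where it is not attained) and passing to the limit gives
\begin{equation*}
\AdvRisk_\epsilon(h\circ g) \;\ge\; 1 - \frac{\I(X;Z)-\RV_\epsilon(g)+\log 2}{\log|\cY|}.
\end{equation*}
Since the right-hand side does not depend on $h$, taking the infimum over $h\in\cH$ yields the claim.

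\textbf{Main obstacle.} No single step is computationally heavy; the delicate point is bookkeeping around the joint law of $(X',Y)$. Lemma \ref{lem:equal} guarantees that every $\mu_{X'}$ in the Wasserstein ball can be realized jointly with $Y$ (via a coupling with $\mu_X$) so that $Y$ keeps its uniform marginal and the Markov chain $Y\to X'\to Z'$ holds, which is exactly what is needed to apply Fano's inequality and the data processing inequality simultaneously. One also needs to be careful that the infimum defining the minimum of $\I(X';g(X'))$ need not be attained, so the argument must be written with an approximating sequence or an $\epsilon$-optimal $\mu_{X'}$ and a limit.
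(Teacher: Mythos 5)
Your proposal is correct and follows essentially the same route as the paper's proof: Lemma \ref{lem:equal} to rewrite the adversarial risk as a supremum over the $\infty$-Wasserstein ball, the Markov chain $Y\to X'\to g(X')\to h(g(X'))$ together with Fano's inequality and data processing to bound the error by $1-\bigl(\I(X';g(X'))+\log 2\bigr)/\log|\cY|$, and the definition of $\RV_\epsilon(g)$ to convert the infimum of $\I(X';g(X'))$ into $\I(X;Z)-\RV_\epsilon(g)$. Your extra care about the infimum not being attained is not actually needed, since one simply takes the supremum of the resulting lower bound over the ball, which is exactly what the paper does.
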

 
Theorem \ref{thm:general lower bound} suggests that adversarial robustness cannot be achieved if the available representation is highly vulnerable or the standard mutual information between $X$ and $g(X)$ is low. Note that $\I(X;g(X))-\RV_\epsilon(g) = \inf\{\I(X';g(X')): X'\sim \mu_{X'}\in\cB_{\W_\infty}(\mu_X,\epsilon)\}$, which corresponds to the worst-case mutual information between input and output of $g$. Therefore, if we assume robust classification as the downstream task for representation learning, then the representation having high worst-case mutual information is a necessary condition for achieving adversarial robustness for the overall classifier. 

In addition, we remark that Theorem \ref{thm:general lower bound} can be extended to general $p$-th Wasserstein distances, if the downstream classifiers are evaluated based on robustness under distributional shift\footnote{See \citet{sinha2018certifying} for a rigorous definition of distributional robustness.}, instead of adversarial risk. To be more specific, if using $\W_p$ metric to define representation vulnerability, we can then establish an upper bound on the maximum distributional robustness with respect to the considered $\W_p$ metric for any downstream classifier based on similar proof techniques of Theorem \ref{thm:general lower bound}.

\section{Measuring Representation Vulnerability}
\label{sec:estimation}

This section presents an empirical method for estimating the vulnerability of a given representation using i.i.d. samples.
Recall from Definition \ref{def:feature rob}, for any $g:\cX\rightarrow\cZ$, the representation vulnerability of $g$ with respect to the input metric probability space $(\cX,\mu_X,\Delta)$ and $\epsilon\geq 0$ is defined as:
\begin{equation}
\label{eq:measureFS}
    \RV_{\epsilon}(g) = \underbrace{\I(X; g(X))}_{J_1} - \underbrace{\inf_{\mu_{X'}\in\cB_{\W_\infty}(\mu_X,\epsilon)} \I(X'; g(X'))}_{J_2}.
\end{equation}
To measure representation vulnerability, we need to compute both terms $J_1$ and $J_2$. However, the main challenge is that we do not have the knowledge of the underlying probability distribution $\mu_X$ for real-world problem tasks. Instead, we only have access to a finite set of data points sampled from the distribution. 
Therefore, it is natural to consider sample-based estimator for $J_1$ and $J_2$ for practical use.

The first term $J_1$ is essentially the mutual information between $X$ and $Z = g(X)$. A variety of methods have been proposed for estimating mutual information \citep{moon1995estimation,darbellay1999estimation,suzuki2008approximating, kandasamy2015nonparametric, moon2017ensemble}. The most effective estimator is the mutual information neural estimator (MINE) \citep{belghazi2018mine}, based on the dual representation of KL-divergence \citep{donsker1983asymptotic}:
\begin{align*}
\hat{\I}_m(X;Z) = \sup_{\theta\in\Theta}\: \E_{\hat\mu_{XZ}^{(m)}}[T_\theta] - \log\big(\E_{\hat\mu_X^{(m)}	\otimes \hat\mu_Z^{(m)}} [\exp(T_\theta)]\big),
\end{align*}
where $T_\theta:\cX\times\cZ\rightarrow\RR$ is the function parameterized
by a deep neural network with parameters $\theta\in\Theta$, and $\hat\mu_{XZ}^{(m)}$, $\hat\mu_X^{(m)}$ and $\hat\mu_Z^{(m)}$ denote the empirical distributions\footnote{Given a set of $m$ samples $\{\bx_i\}_{i\in[m]}$ from a distribution $\mu$, we let $\hat\mu^{(m)}=\frac{1}{m}\sum_{i\in[m]}\delta_{\bx_i}$ be the empirical measure of $\mu$.} of random variables $(X,Z)$, $X$ and $Z$ respectively, based on $m$ samples. In addition, \citet{belghazi2018mine} empirically demonstrates the superiority of the proposed estimator in terms of estimation accuracy and efficiency, and prove that it is strongly consistent: for all $\varepsilon>0$, there exists $M\in\ZZ$ such that for any $m\geq M$, $|\hat{\I}_m(X;Z)-\I(X;Z)|\leq \varepsilon$ almost surely.
Given the established effectiveness of this method, we implement MINE to estimate $\I(X;g(X))$ as the first step.

Compared with $J_1$, the second term $J_2$ is much more difficult to estimate, as it involves finding the worst-case perturbations on $\mu_X$ in a $\infty$-Wasserstein ball in terms of mutual information. As with the estimation of $J_1$, we only have a finite set of instances sampled from $\mu_X$. On the other hand, due to the non-linearity and the lack of duality theory with respect to the $\infty$-Wasserstein distance \citep{champion2008wasserstein}, it is inherently difficult to directly solve an $\infty$-Wasserstein constrained optimization problem, even if we work with the empirical distribution of $\mu_X$. To deal with the first challenge, we replace $\mu_X$ with its empirical measure $\hat\mu^{(m)}_X$ based on i.i.d. samples. Then, to avoid the need to search through the whole $\infty$-Wasserstein ball,
we restrict the search space of $\mu_{X'}$ to be the following set of empirical distributions:
\begin{align}
\label{eq:perturbation set}
    \cA(\cS,\epsilon) = \bigg\{\frac{1}{m}\sum_{i=1}^m \delta_{\bx'_i}\colon \bx'_i\in\cB(\bx_i,\epsilon) \: \forall i\in[m] \bigg\},
\end{align}
where $\cS=\{\bx_i: i\in[m]\}$ denotes the given set of $m$ data points sampled from $\mu_X$. Note that the considered set $\cA(\cS,\epsilon)\subseteq\cB_{\W_\infty}(\hat\mu^{(m)}_{X},\epsilon)$, since each perturbed point $\bx_i'$ is at most $\epsilon$-away from $\bx_i$. Finally, making use of the dual formulation of KL-divergence that is used in MINE, we propose the following empirical optimization problem for estimating $J_2$:
\begin{align}
\label{eq:worst case MI}
    \min_{\mu_{X'}}\: \hat{\I}_m\big(X';g(X')\big) \:\text{ s.t. }\: \mu_{X'}\in\cA(\cS,\epsilon),
\end{align}
where we simply set the empirical distribution $\hat\mu^{(m)}_{X'}$ to be the same as $\mu_{X'}$. In addition, we propose a heuristic alternating minimization algorithm to solve \eqref{eq:worst case MI} (see Appendix \ref{sec:alg code} for the pseudocode and a complexity analysis of the proposed algorithm). More specifically, our algorithm alternatively performs gradient ascent on $\theta$ for the inner maximization problem of estimating $\hat{\I}_m(X';g(X'))$ given $\mu_X'$, and searches for the set of worst-case perturbations on $\{\bx'_i: i\in[m]\}$ given $\theta$ based on projected gradient descent.

\section{Learning Robust Representations}
\label{sec:training}

In this section, we present our method for learning adversarially robust representations. First, we introduce the mutual information maximization principle for representation learning \citep{linsker1989generate,bell1995information}.
Mathematically, given an input probability distribution $\mu_X$ and a set of representations $\cG=\{g:\cX\rightarrow\cZ\}$, the maximization principle proposes to solve this optimization problem:
\begin{align}
\label{eq:mutual infomax obj}
    \max_{g\in\cG} \:\:\I\big(X; g(X)\big).
\end{align}

Although this principle has been shown to be successful for learning good representations under the standard setting \citep{hjelm2018learning}, it becomes ineffective when considering adversarial perturbations (see Table \ref{tab:main:cifar} for an illustration). Motivated by the theoretical connections between feature sensitivity and adversarial risk for downstream robust classification shown in Section \ref{sec:robust feature}, we stimulate robust representations by adding a regularization term based on representation vulnerability: 
\begin{align}
\label{eq:robust MI obj}
\max_{g \in \mathcal{G}} \quad  \I(X;g(X)) - \beta \cdot \RV_{\epsilon}(g), 
\end{align}
where $\beta\geq 0$ is the trade-off parameter between $\I(X;g(X))$ and $\RV_{\epsilon}(g)$. When $\beta =0$, \eqref{eq:robust MI obj} is same as the objective for learning standard representations \eqref{eq:mutual infomax obj}.
Increasing the value of $\beta$ will produce representations with lower vulnerability, but may undesirably affect the standard mutual information $\I(g(X); X)$ if $\beta$ is too large. In particular, we set $\beta=1$ in the following discussions, which allows us to simplify \eqref{eq:robust MI obj} to obtain the following optimization problem:
\begin{align}
\label{eqn:objective}
\max_{g \in \mathcal{G}}\: \min_{\mu_{X'}\in\cB_{\W_\infty}(\mu_{X},\epsilon)}\: \I\big(X';g(X')\big).
\end{align}
The proposed training principle \eqref{eqn:objective} aims to maximize the mutual information between the representation's input and output under the worst-case input distribution perturbation bounded in a $\infty$-Wasserstein ball. We remark that optimization problem \eqref{eqn:objective} aligns well with the results of Theorem \ref{thm:general lower bound}, which shows the importance of the learned feature representation achieving high worst-case mutual information for a downstream robust classification task.

As with estimating the feature sensitivity in Section~\ref{sec:estimation}, we do not have access to the underlying $\mu_X$. However, the inner minimization problem is exactly the same as estimating the worst-case mutual information $J_2$ in \eqref{eq:measureFS}, thus we can simply adapt the proposed empirical estimator \eqref{eq:worst case MI} to solve \eqref{eqn:objective}. To be more specific, we reparameterize $g$ using a neural network with parameter $\psi\in\Psi$ and use the following min-max optimization problem:
\begin{align}
\label{eq:empirical train obj}
     \max_{\psi\in\Psi} \: \min_{\mu_{X'}\in\cA(\cS,\epsilon)} \hat{\I}_m\big(X';g_{\psi}(X')\big). 
\end{align}
Based on the proposed algorithm for the inner minimization problem, \eqref{eq:empirical train obj} can be efficiently solved using a standard optimizer, such as stochastic gradient descent.

\section{Experiments}\label{sec:experiments}

This section reports on experiments to study the implications of robust representations on benchmark image datasets. Instead of focusing directly improving model robustness, our experiments focus on understanding the proposed definition of robust representations as well as its implications.  Based on the proposed estimator in Section \ref{sec:estimation}, Section~\ref{section:exp1} summarizes experiments to empirically test the relationship between representation vulnerability and model robustness, by extracting internal representations from the state-of-the-art pre-trained standard and robust classification models. In addition, we empirically evaluate the general lower bound on adversarial risk presented in Theorem \ref{thm:general lower bound}. In Section \ref{section:exp2}, we evaluate the proposed training principle for learning robust representations on image datasets, and test its performance with comparisons to the state-of-the-art standard representation learning method in a downstream robust classification framework. We also visualize saliency maps as an intuitive criteria for evaluating representation robustness.

We conduct experiments on MNIST \citep{lecun-mnisthandwrittendigit-2010}, Fashion-MNIST \citep{fashion-mnist}, SVHN \citep{netzer2011reading}, and CIFAR-10 \citep{krizhevsky2009learning}, considering typical $\ell_\infty$-norm bounded adversarial perturbations for each dataset ($\epsilon=0.3$ for MNIST, $0.1$ for Fashion-MNIST, $4/255$ for SVHN, and $8/255$ for CIFAR-10).
We use the PGD attack \citep{madry2017towards} for both generating adversarial distributions in the estimation of worst-case mutual information and evaluating model robustness. To implement our proposed estimator \eqref{eq:worst case MI}, we adopt the \textit{encode-and-dot-product} model architecture in \citet{hjelm2018learning} and adjust it to adapt to different forms of representations. We leverage implementations from \citet{robustness2019package} and \citet{hjelm2018learning} in our implementation\footnote{\urlstyle{fleo} \url{https://github.com/schzhu/learning-adversarially-robust-representations}}. Implementation details are provided in Appendix~\ref{section:appendix implement}.

\subsection{Representation Robustness}
\label{section:exp1}

To evaluate our proposed definition on representation vulnerability and its implications for downstream classification models, we conduct experiments on image benchmarks using various classifiers, including VGG \citep{Simonyan15}, ResNet \citep{he2016deep}, DenseNet \citep{huang2017densely} and the simple convolutional neural network in \citet{hjelm2018learning} denoted as Baseline-H.

\begin{figure}[tb] 
\centering
\includegraphics[width=0.4\textwidth]{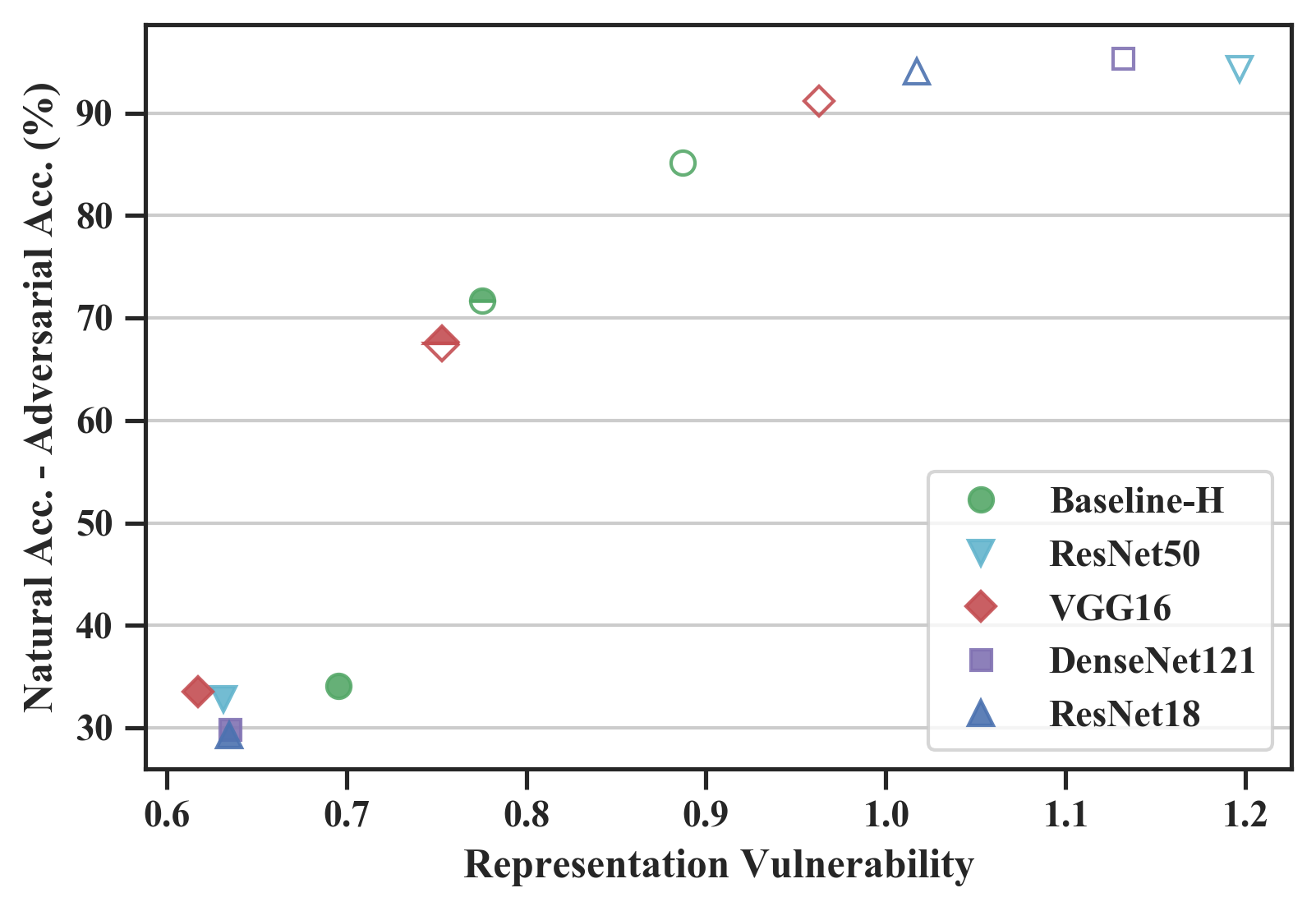}
\caption{Correlations between the representation vulnerability and the CIFAR-10 model's natural-adversarial accuracy gap. Filled points indicate robust models (trained with $\epsilon=8/255$),
half-filled are models adversarially trained with $\epsilon=2/255$, and unfilled points are standard models.}
\label{fig:feature_model_robustness_delta}
\end{figure}

\shortsection{Correlation with model robustness}
Theorem~\ref{thm:two Gaussian} establishes a direct correlation between our representation vulnerability definition and achievable model robustness for the synthetic Gaussian-mixture case, but we are not able to theoretically establish that correlation for arbitrary distributions. Here, we empirically test this correlation on image benchmark datasets.
Figure \ref{fig:feature_model_robustness_delta} summarizes the results of these experiments for CIFAR-10, where we set the logit layer as the considered representation space.
The adversarial gap decreases with decreasing representation vulnerability in an approximately consistent relationship. 
Models with low logit layer representation vulnerability tend to have low natural-adversarial accuracy gap, which is consistent with the intuition behind our definition and with the theoretical result on the synthetic Gaussian-mixture case. This suggests the correlation between representation vulnerability and model robustness may hold for general case.

\begin{figure}[tb]
\centering
\includegraphics[width=0.4\textwidth]{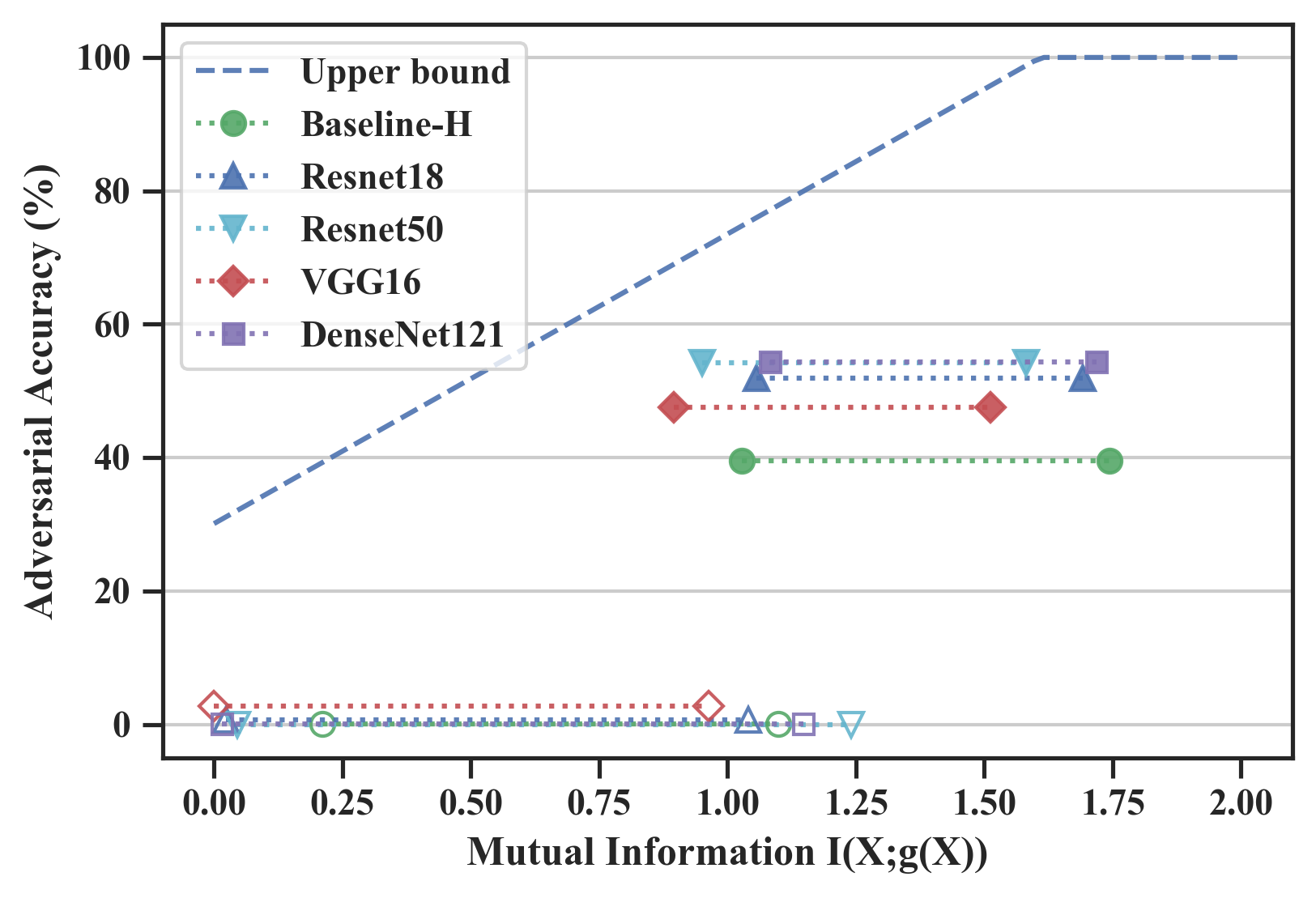}
\caption{Normal and worst case mutual information for logit-layer representations. Each pair of points shows the result of a specific model---the left point indicates the worst case mutual information and the right for the normal mutual information. Filled points are robust models; hollow points are standard models.}
\label{fig:feature_model_robustness}
\end{figure}

\shortsection{Adversarial risk lower bound}
Theorem \ref{thm:general lower bound} provides a lower bound on the adversarial risk that can be achieved by any downstream classifier as a function of representation vulnerability. 
To evaluate the tightness of this bound, we estimate the normal-case and worst-case mutual information $\I(X; g(X))$ of layer representation $g$ for different models, and empirically evaluate the adversarial risk of the models.
Figure~\ref{fig:feature_model_robustness} shows the results, where we again set the logit layer as the feature space for a more direct comparison.
The lower bound of adversarial risk is calculated according to Theorem \ref{thm:general lower bound} and is converted to the upper bound of adversarial accuracy for reference. 
In particular, for standard models, both the estimated worst-case mutual information and the adversarial accuracy are close to zero, whereas the 
computed upper bounds on adversarial accuracy are around $30\%$.
We empirically observed around $50\%$ adversarial accuracy for robust models, whereas the bounds computed using the estimated worst-case mutual information and Theorem \ref{thm:general lower bound} are about $75\%$. 
This shows that Theorem~\ref{thm:general lower bound} gives a reasonably tight bound for a model's adversarial accuracy with respect to the logit-layer representation robustness. 

Figure~\ref{fig:feature_model_robustness} also indicates that even the robust models produced by adversarial training have representations that are not sufficiently robust to enable robust downstream classifications. For example, robust DenseNet121 in our evaluations has the highest logit layer worst-case mutual information of $1.08$, yet the corresponding adversarial accuracy is upper bounded by $77.0\%$ which is unsatisfactory for CIFAR-10. Such information theoretic limitation also justifies our training principle of worst-case mutual information maximization, since on the other hand the adversarial accuracy upper bound calculated by normal-case mutual information does not constitute a limitation for most robust models in our experiments (as in Figure~\ref{fig:feature_model_robustness_delta}, most robust models achieve adversarial accuracy close to $100\%$).

\shortsection{Internal feature robustness}
We further investigate the 
implications of our proposed definition from the level of individual features. Specifically for neural networks, we consider the function from the input to each individual neuron within a layer as a feature. The motivations for considering feature robustness comes from the fact that mutual information in terms of the whole representation is controlled by the sum of all the features' mutual information (see Appendix \ref{section:appendix:tensorization} for a rigorous argument) and robust features are potentially easier to train \citep{garg2018spectral}.
As an illustration, we evaluate the robustness of all the convolutional kernels in the second layer of the Baseline-H model. 
Each neuron evaluated here is a composite convolutional kernel (all kernels in the first layer connected to a second layer kernel) with image input size $10\times 10$.
Figure \ref{fig:neuron_historgram} shows the results that are averaged over two independently trained models for each type. 
This result reveals the apparent difference in feature robustness between a standard model and the adversarially-trained robust model, even in lower layers. 
Although in this case the result does not prohibit a robust downstream model for lower layers neurons, for neurons in higher layers the difference becomes more distinct and the vulnerability of neurons can thus be the bottleneck of achieving high model robustness.
The different feature robustness according to our definition also coincide with the saliency maps of features (see Figure~\ref{fig:saliencyKernel} in Appendix~\ref{section:appendix:additional results}), where the saliency maps of robust features are apparently more interpretable compared to those of standard features.

\begin{figure}[t]
\centering
\includegraphics[width=0.95\columnwidth]{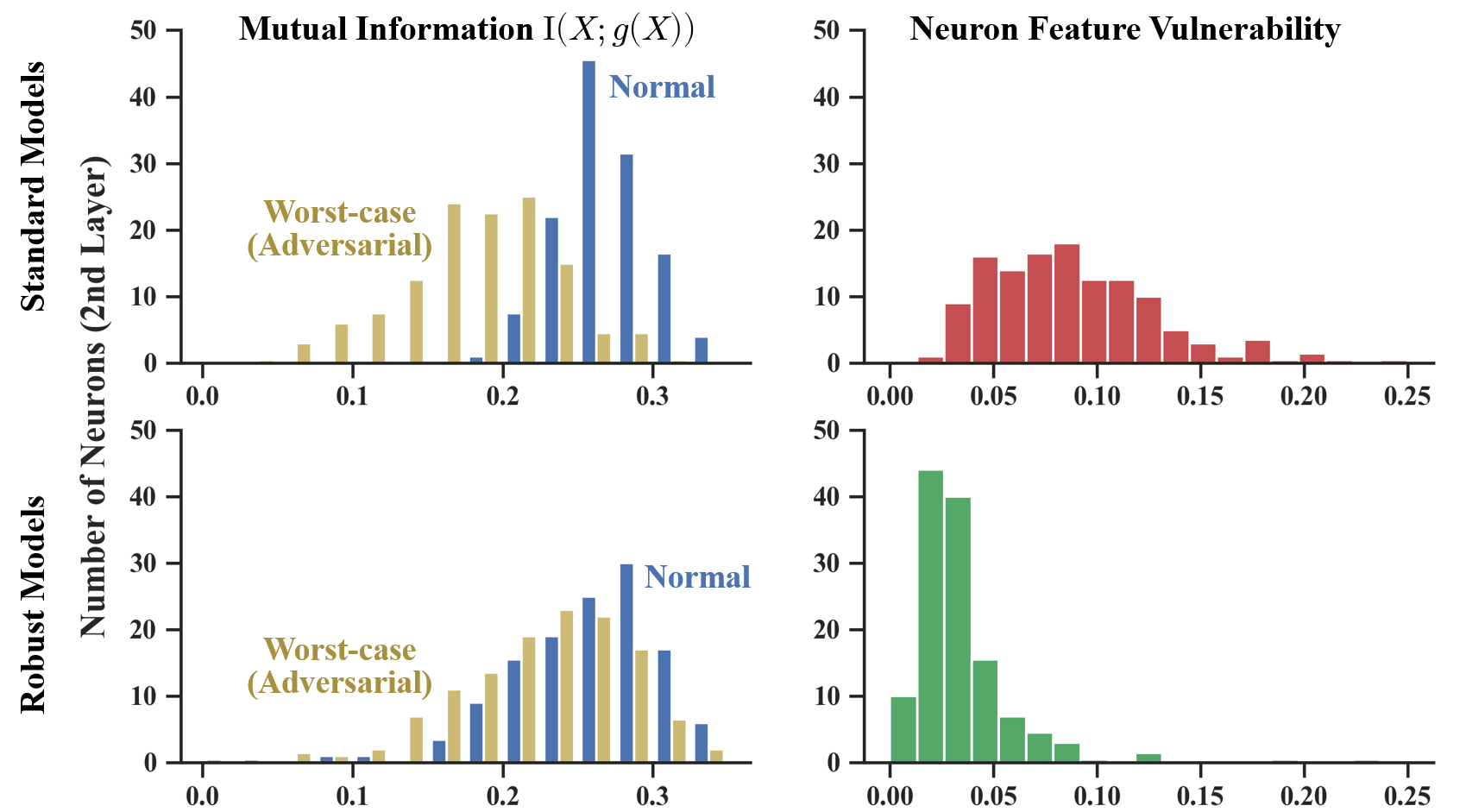}
\caption{Distribution of mutual information $I(X;g(X))$ and feature vulnerability in the second convolutional layer of Baseline-H. The upper plots are for standard models, and the lower plots are for robust models. The total number of neurons is $128$.}
\label{fig:neuron_historgram}
\end{figure}

\begin{table*}[t]
\centering
\begin{tabular}{@{}llSSSS@{}}
\toprule
 &  & \multicolumn{2}{c}{MLP $h$} & \multicolumn{2}{c}{Linear $h$} \\ 
\multicolumn{1}{c}{\text{Representation ($g$)}} & \multicolumn{1}{c}{\text{Classifier ($h$)}} & \multicolumn{1}{c}{\text{Natural}} & \multicolumn{1}{c}{\text{Adversarial}} & \multicolumn{1}{c}{\text{Natural}} & \multicolumn{1}{c}{\text{Adversarial}} \\ \midrule
\multicolumn{1}{c}{\citet{hjelm2018learning}} & \multicolumn{1}{c}{Standard} & 58.77 \pm 0.22 & 0.22 \pm 0.08 & 47.01 \pm 0.53 & 0.15 \pm 0.03 \\
\multicolumn{1}{c}{\citet{hjelm2018learning}} & \multicolumn{1}{c}{Robust} & 29.75 \pm 1.49 & 15.08 \pm 0.63 & 22.79 \pm 1.42 & 10.28 \pm 0.52 \\
\multicolumn{1}{c}{Ours} & \multicolumn{1}{c}{Standard} & \bfseries 62.54 \pm 0.12 & 14.06 \pm 0.69 &\bfseries 50.29 \pm 0.58 & 10.98 \pm 0.49 \\
\multicolumn{1}{c}{Ours} & \multicolumn{1}{c}{Standard (E.S.)} & 51.59 \pm 3.34 & 27.53 \pm 0.81 & 48.55 \pm 0.63 & 13.52 \pm 0.16 \\
\multicolumn{1}{c}{Ours} & \multicolumn{1}{c}{Robust} & 52.34 \pm 0.17 & \bfseries 31.52 \pm 0.31 & 43.55 \pm 0.10 & \bfseries 25.15 \pm 0.10 \\ \midrule
\multicolumn{2}{c}{Fully-Supervised Standard} & 86.33 \pm 0.17 & 0.07 \pm 0.02 & 86.36 \pm 0.13 & 0.02 \pm 0.01 \\
\multicolumn{2}{c}{Fully-Supervised Robust} & 70.71 \pm 0.58 & 40.50 \pm 0.27 & 72.44 \pm 0.59 & 39.98 \pm 0.16 \\ \bottomrule
\end{tabular}
\caption{Comparisons of different representation learning methods on CIFAR-10 in downstream classification settings. \textit{E.S.} denotes early stopping under the criterion of the best adversarial accuracy. We present mean accuracy and the standard deviation over $4$ repeated trials. }
\label{tab:main:cifar}
\end{table*}

\subsection{Learning Robust Representations}
\label{section:exp2}

Our worst-case mutual information maximization training principle provides an unsupervised way to learn adversarially robust representations. 
Since there are no established ways to measure the robustness of a representation, empirically testing the robustness of representations learned by our training principle poses a dilemma. To avoid circular reasoning, we evaluate the learned representations by running a series of downstream adversarial classification tasks and comparing the performance of the best models we are able to find for each representation. In addition, recent work shows that the interpretability of saliency map has certain connections with robustness \citep{etmann2019connection,ilyas2019adversarial}, thus we study the saliency map as an alternative criteria for evaluating robust representations.

The unsupervised representation learning approach based on mutual information maximization principle in \citet{hjelm2018learning} achieves the state-of-the-art results in many downstream tasks, including standard classification.
We further adopt their encoder architecture in our implementation, and extend their evaluation settings to adversarially robust classification. Specifically, we truncate the front part of Baseline-H with a $64$-dimensional latent layer output as the representation $g$ and train it by the worst-case mutual information maximization principle using only unlabeled data (removing the labels from the normal training data). We test two architectures (two-layer multilayer perceptron and linear classifier) for implementing the downstream classifier $h$ and train it using labeled data after the encoder $g$ has been trained using unlabeled data. Appendix \ref{section:appendix implement} provides additional details on the experimental setup. 

\shortsection{Downstream classification tasks} 
Comparison results on CIFAR-10 are demonstrated in Table \ref{tab:main:cifar} (see Appendix \ref{section:appendix:additional results} for a similar results for MNIST, Fashion-MNIST, and SVHN). 
The fully-supervised models are trained for reference, from which we can see the simple model architecture we use achieves a decent natural accuracy of $86.3\%$; the adversarially-trained robust model reduces accuracy to around $70\%$ with adversarial accuracy of $40.5\%$.  The baseline, with $g$ and $h$ both trained normally, resembles the setting in \citet{hjelm2018learning} and achieves a natural accuracy of $58.8\%$.
For representations learned using worst-case mutual information maximization, the composition with standard two-layer multilayer perceptron (MLP) $h$ achieves a non-trivial (compared to the $0.2\%$ for the standard representation) adversarial accuracy of $14.1\%$. 
When $h$ is further trained using adversarial training, the robust accuracy increases to $31.5\%$ which is comparable to the result of the robust fully-supervised model. As an ablation, the robust $h$ based on standard $g$ achieves an adversarial accuracy of $15.1\%$, yet the natural accuracy severely drops below $30\%$, indicating that a robust classifier cannot be found using the vulnerable representation. The case where $h$ is a simple linear classifier shows similar results. These comparisons show that the representation learned using worst-case mutual information maximization can make the downstream classification more robust over the baseline and approaches the robustness of fully-supervised adversarial training. This provides evidence that our training principle produces adversarially robust representations.

Another interesting implication given by results in Table~\ref{tab:main:cifar} is that robustly learned representations may also have better natural accuracy ($62.5\%$) over the standard representation ($58.8\%$) in downstream classification tasks on CIFAR-10. This matches our experiments in Figure~\ref{fig:feature_model_robustness} where logit layer representations in robust models conveys more normal-case mutual information (up to $1.75$) than those in standard models (up to $1.25$). However, this is not the case on MNIST dataset as in Table~\ref{tab:main:mnist}. We conjecture that this is because the information conveyed by robust representations has better generalizations, and the generalization is more of a problem on CIFAR-10 than on MNIST \citep{schmidt2018adversarially}.

\shortsection{Saliency maps} 
A saliency map is commonly defined as the gradient of a model's loss with respect to the model's input \citep{etmann2019connection}. For a classification model, it intuitively illustrates what the model looks for in changing its classification decision for a given sample.
Recent work \citep{etmann2019connection,ilyas2019adversarial} indicates, at least in some synthetic settings, that the more alignment the saliency map has with the input image, the more adversarially robust the model is.
As an additional test of representation robustness, we calculate the saliency maps of standard and robust representations $g$ by the mutual information maximization loss with respect to the input.
Figure~\ref{fig:saliency:main} shows that the saliency maps of the robust representation appear to be much less noisy and more interpretable in terms of the alignment with original images.
Intuitively, this shows that robust representations capture relatively higher level visual concepts instead of pixel-level statistical clues \citep{engstrom2019learning}. The more interpretable saliency maps of representation learned by our training principle further support its effectiveness in learning adversarially robust representation.
\begin{figure}[tb]
\centering
\includegraphics[width=0.37\textwidth]{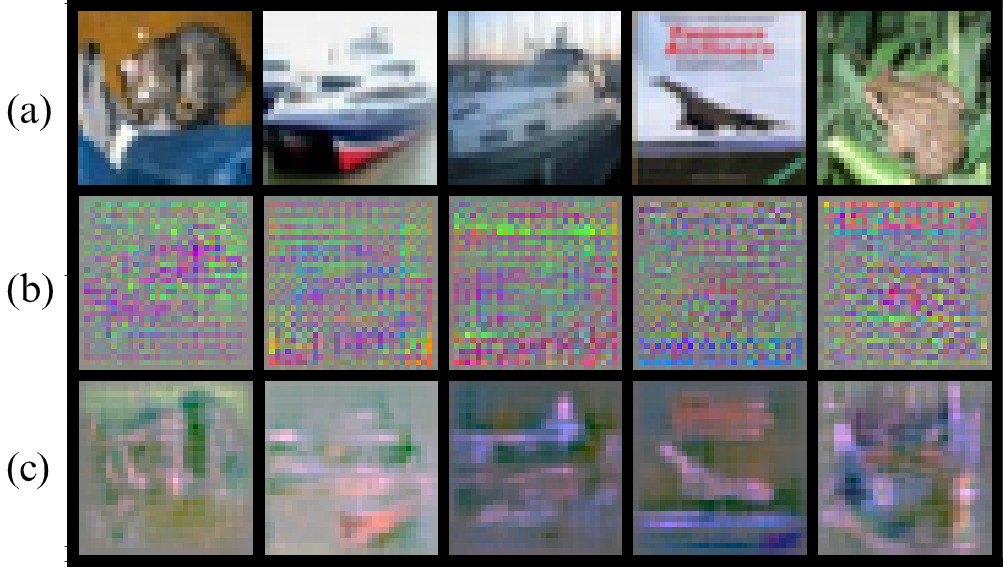}
\vspace{-0.5em}
\caption{Visualization of saliency maps of different models on CIFAR-10: (a) original images (b) representations learned using \citet{hjelm2018learning} (c) representations learned using our method.}
\label{fig:saliency:main}
\end{figure}

\section{Conclusion}
We proposed a novel definition of representation robustness based on the worst-case mutual information, and showed both theoretical and empirical connections between our definition and model robustness for a downstream classification task. In addition, by developing estimation and training methods for representation robustness, we demonstrated the connection and the usefulness of the proposed method on benchmark datasets. Our results are not enough to produce strongly robust models, but they provide a new approach for understanding and measuring achievable adversarial robustness at the level of representations.

\section*{Acknowledgements}

This work was partially funded by an award from the National Science Foundation SaTC program (Center for Trustworthy Machine Learning, \#1804603) and additional support from Amazon, Baidu, and Intel.

\bibliography{robust_feature}
\bibliographystyle{icml2020}

\newpage
\onecolumn

\appendix

\section{Proofs}

In this section, we provide the proofs of our theoretical results in Section \ref{sec:robust feature}. 

For ease of presentation, we introduce the following notations and an alternative definition of the $\infty$-Wasserstein distance.
Let $(\cX,\mu)$ and $(\cY,\nu)$ be two probability spaces. We say that $T : \cX \rightarrow \cY$ transports $\mu\in\cP(\cX)$ to $\nu\in\cP(\cY)$, and we call $T$ a
transport map, if
$\nu(B) = \mu(T^{-1}(B))$, for all $\nu$-measurable sets $B$.
In addition, for any measurable map $T:\cX\rightarrow\cY$, we define the \emph{pushforward} of $\mu$ through $T$ as $(T_{\#}(\mu))$ given by
$$
    (T_{\#}(\mu))(B) = \mu(T^{-1}(B)), \quad \text{for any measurable}\:\: B\subseteq\cY.
$$

\shortsection{Alternative definition of $\infty$-Wasserstein distance} From the perspective of transportation theory, given two probability measures $\mu$ and $\nu$ on $(\cX,\Delta)$, any joint probability distribution $\gamma\in\Gamma(\mu,\nu)$ corresponds to a specific transport map $T:\cX\rightarrow\cX$ that moves $\mu$ to $\nu$. Then, the $p$-th Wasserstein distance can viewed as finding the optimal transport map to move from $\mu$ to $\nu$ that minimizes some cost functional depending on $p$ \citep{kolouri2017optimal}. For the case where $p=\infty$, if we let $T$ be the transport map induced by a given $\gamma\in\Gamma(\mu,\nu)$, then the cost functional can be informally understood as the maximum of all the transport distances $\Delta(T(\bx),\bx)$. More rigorously, the $\infty$-Wasserstein distance can be alternatively defined as
\begin{align*}
    \W_\infty(\mu,\nu) &\coloneqq \inf_{\gamma \in \Gamma(\mu, \nu)}\: \underset{(\bx,\bx')\in\cX^2}{\gamma\esssup}\: \Delta(\bx,\bx') \\
    &= \inf_{\gamma \in \Gamma(\mu, \nu)}\: \inf\big\{t\geq 0\colon \gamma\big(\Delta(\bx,\bx')>t\big)=0\big\}.
\end{align*}
A more detailed discussion of $\infty$-Wasserstein distance can be found in \citet{champion2008wasserstein}.

\subsection{Proof of Theorem \ref{thm:two Gaussian}}
\label{proof:thm two Gaussian}

Theorem~\ref{thm:two Gaussian} (restated here) connects the vulnerability of a given representation with the minimum adversarial gap of any classifier based on that representation.

{\bf Theorem 3.2.\ }
Let $(\cX,\|\cdot\|_p)$ be the input metric space and $\cY = \{-1,1\}$ be the label space. Assume the underlying data are generated according to \eqref{eq:GMM}. Consider the feature space $\cZ=\{-1,1\}$ and the set of representations, 
$$
\cG_{\text{bin}} = \{g:\bx\mapsto\sgn(\bw^\top\bx),\forall \bx\in\cX \:\big|\; \|\bw\|_2=1\}.
$$ 
Let $\cH=\{h:\cZ\rightarrow\cY\}$ be the set of non-trivial downstream classifiers.\footnote{To be more specific, we do not consider the case where $h$ is a constant function. Under our problem setting, there are two elements in $\cH$, namely $h_1(z) = z$, $h_2(z) = -z$, for any $\bz\in\cZ$.}
Given $\epsilon\geq 0$, for any $g\in\cG_\bin$, we have 
\begin{align*}
    \int_{\frac{1}{2}-\AG_\epsilon(f^*)}^{\frac{1}{2}} \H_2'(\theta)d\theta \leq \RV_\epsilon(g) \leq \int_{\frac{1}{2}-\frac{1}{2}\AG_\epsilon(f^*)}^{\frac{1}{2}} \H_2'(\theta)d\theta,
\end{align*}
where $f^*=\argmin_{h\in\cH}\AdvRisk_{\epsilon}(h\circ g)$ is the optimal classifier based on $g$, $\H_2(\theta) = -\theta\log \theta - (1-\theta)\log(1-\theta)$ is the binary entropy function and $\H_2'$ denotes its derivative.

\begin{proof}
Let $\mu_{XY}$ be the underlying joint probability distribution of the examples according to \eqref{eq:GMM} and $\mu_X$ be corresponding the marginal distribution of $X$.
To begin with, we compute the explicit formulation for the defined representation vulnerability in Definition \ref{def:feature rob}. Note that $\I(U;V) = \H(U)-\H(U|V) = \H(V)-\H(V|U)$. Thus, for any $g_{\bw}\in\cG$ we have
\begin{align*}
    \RV_\epsilon(g_{\bw}) 
    &= \H(g_{\bw}(X)) - \inf_{\mu_{X'}\in \cB_{\W_\infty}(\mu_X,\epsilon)}\: \H(g_{\bw}(X')) \\
    &= 1 - \inf_{\mu_{X'}\in\cB_{\W_\infty}(\mu_{X},\epsilon)}\: \bigg( - \Pr_{\bx'\sim\mu_{X'}}(\bw^\top\bx'\geq 0) \cdot\log \Big[\Pr_{\bx'\sim\mu_{X'}}(\bw^\top\bx'\geq 0)\Big] \\
    &\qquad\qquad-\Pr_{\bx'\sim\mu_{X'}}(\bw^\top\bx'< 0 )\cdot\log \Big[\Pr_{\bx'\sim\mu_{X'}}(\bw^\top\bx'< 0)\Big] \bigg),
\end{align*}
where the first equality holds because $\H(g_{\bw}(U)\:|\:U)=0$ for any random variable $U$, and the second equality is due to the fact that the distribution of $X$ is symmetric with respect to $\bw^\top\bx = 0$. Note that the binary entropy function $\H_2(\theta) = -\theta\log \theta - (1-\theta)\log(1-\theta)$ is monotonically increasing with respect to $\theta$ in $[0,1/2)$ and monotonically decreasing in  $(1/2,1]$. Therefore, the optimal value of $\RV_\epsilon(g_{\bw})$ is achieved when $\mu_{X'}$ either minimizes $\Pr_{\bx'\sim\mu_{X'}}(\bw^\top\bx'\geq 0)$ or maximizes $\Pr_{\bx'\sim\mu_{X'}}(\bw^\top\bx'\geq 0)$.

According to the H\"older's inequality, we have $|\langle \ab, \bb \rangle|\leq \|\ab\|_p\cdot\|\bb\|_q$ for any $\ab,\bb\in\RR^d$, where $1/p+1/q=1$. By the alternative definition of $\infty$-Wasserstein distance, for any $\mu_{X'}$ that satisfies $\W_\infty(\mu_{X'},\mu_X)\leq\epsilon$,  it induces a transport map $T:\cX\rightarrow\cX$ such that $\mu_{X}'=T_{\#}(\mu_X)$ and $\|\Delta(T(X),X)\|_p\leq \epsilon$ holds almost surely with respect to the randomness of $X$ and $T$. Thus, we have
$$
\Pr_{\bx\sim\mu_X}\:\Big[-\epsilon\cdot\|\bw\|_q\leq\bw^\top(T(\bx)-\bx) \leq \epsilon\cdot\|\bw\|_q\Big] \geq \Pr_{\bx\sim\mu_X}\:\Big[\|T(\bx)-\bx\|_p\leq \epsilon\Big] = 1,
$$
which implies
\begin{align*}
    \Pr_{\bx\sim\mu_{X}}\big(\bw^\top\bx - \epsilon\cdot\|\bw\|_q \geq 0\big) \leq  \Pr_{\bx'\sim\mu_{X'}}(\bw^\top\bx'\geq 0) &\leq \Pr_{\bx\sim\mu_{X}}\big(\bw^\top\bx + \epsilon\cdot\|\bw\|_q \geq 0\big).
\end{align*}
We remark that the equality can be achieved when the the transport map $T$ is constructed by perturbing the $i$-th element of any sampled $\bx\sim\mu_X$ by $\epsilon\cdot (w_i^q / \sum_i{w_i}^q)^{1/p}$, for any $i=1,2,\ldots,d$. In addition, according to the assumed Gaussian Mixture model \eqref{eq:GMM}, we have
\begin{align*}
    \Pr_{\bx\sim\mu_{X}}\big(\bw^\top\bx - \epsilon\cdot\|\bw\|_q \geq 0\big) &= \frac{1}{2}\: \Pr_{\bx\sim\cN(\btheta^*,\bSigma^*)} \big[\bw^\top\bx \geq \epsilon\cdot\|\bw\|_q\big] + \frac{1}{2}\: \Pr_{\bx\sim\cN(-\btheta^*,\bSigma^*)} \big[\bw^\top\bx \geq \epsilon\cdot\|\bw\|_q\big] \\
   &= \frac{1}{2} -  \frac{1}{2}\:\Pr_{Z\sim\cN(0,1)} \bigg[   \frac{-\epsilon\cdot\|\bw\|_q + \bw^\top\btheta^*}{ \sqrt{\bw^\top\bSigma^*\bw}} \leq Z \leq \frac{\epsilon\cdot\|\bw\|_q + \bw^\top\btheta^*}{ \sqrt{\bw^\top\bSigma^*\bw}}
   \bigg].
\end{align*}
Similarly, we have
\begin{align*}
    \Pr_{\bx\sim\mu_{X}}\big(\bw^\top\bx + \epsilon\cdot\|\bw\|_q \geq 0\big) &= \frac{1}{2} + \frac{1}{2}\:\Pr_{Z\sim\cN(0,1)} \bigg[   \frac{-\epsilon\cdot\|\bw\|_q + \bw^\top\btheta^*}{ \sqrt{\bw^\top\bSigma^*\bw}} \leq Z \leq \frac{\epsilon\cdot\|\bw\|_q + \bw^\top\btheta^*}{ \sqrt{\bw^\top\bSigma^*\bw}}
   \bigg].
\end{align*}

Therefore, we derive the explicit formulation for $\RV_\epsilon(g_{\bw})$
\begin{align}
\label{eq:feature sensitivity}
    \RV_\epsilon(g_{\bw}) = \H_2\bigg(\frac{1}{2}\bigg) -  \H_2\bigg(\frac{1}{2}-\Pr_{Z\sim\cN(0,1)} \bigg[   \frac{\bw^\top\btheta^*-\epsilon\cdot\|\bw\|_q}{ \sqrt{\bw^\top\bSigma^*\bw}} \leq Z \leq \frac{ \bw^\top\btheta^*+\epsilon\cdot\|\bw\|_q}{ \sqrt{\bw^\top\bSigma^*\bw}}
  \bigg]\bigg),
\end{align}
where $\H_2(\cdot)$ is denotes binary entropy function.

Next, given $g_{\bw}\in\cG_{\bin}$, we are going to compute the adversarial gap of $f\circ g_{\bw}$ for each $h\in\cH$. To begin with, we consider the first case $h_1(z)=z$ for any $z\in\cZ$. According to the definition of adversarial risk, we have
\begin{align*}
\AdvRisk_\epsilon(h_1\circ g_{\bw}) &=  \Pr_{(\bx,y)\sim\mu_{XY}} \big[\exists\: \bx'\in\cB(\bx, \epsilon) \:\:\text{s.t.}\:\: \sgn(\bw^\top\bx') \neq y \big] \\
&= \Pr_{(\bx,y)\sim\mu_{XY}} \bigg[\min_{\bx'\in\cB(\bx, \epsilon)} y\cdot \bw^\top\bx' \leq 0 \bigg] \\
&= \Pr_{(\bx,y)\sim\mu_{XY}} \bigg[ y\cdot \bw^\top\bx \leq - \min_{\bDelta\in\cB(\zero, \epsilon)} \bw^\top\bDelta \bigg] \\
&= \Pr_{Z\sim\cN(0,1)} \bigg[Z \leq \frac{\epsilon \|\bw\|_q - \bw^\top\btheta^*}{\sqrt{\bw^\top\bSigma^*\bw}}\bigg],
\end{align*}
where the equality is due to the fact that $\cB(\zero, \epsilon)$ is symmetric with respect to $\zero$, and the last equality holds because of the H\"older's inequality: for any $\ab, \bb\in\RR^n$, it holds that $\ab^\top\bb \geq -\|\ab\|_p\cdot \|\bb\|_q$ and the equality is achieved when $(a_i/\|\ba\|_p)^p = (b_i/\|\bb\|_q)^q$ for any $i\in\{1,2,\ldots,d\}$. 

Similarly, the standard risk can be computed as: 
\begin{align*}
\Risk(h_1\circ g_{\bw}) = \Pr_{(\bx,y)\sim\cD} \big[ \sgn(\bw^\top\bx) \neq y \big] = \Pr_{(\bx,y)\sim\cD} \big[ y\cdot \bw^\top\bx \leq 0 \big] = \Pr_{Z\sim\cN(0,1)} \bigg[Z \leq  \frac{- \bw^\top\btheta^*}{\sqrt{\bw^\top\bSigma^*\bw}}\bigg].
\end{align*}
Thus, we derive the gap between standard and adversarial risk with respect to $h_1\circ g_{\bw}$:
\begin{align*}
    \AG_\epsilon(h_1\circ g_{\bw}) = \Pr_{Z\sim\cN(0,1)} \bigg[\frac{ \bw^\top\btheta^* - \epsilon \|\bw\|_q}{\sqrt{\bw^\top\bSigma^*\bw}} \leq Z \leq \frac{\bw^\top\btheta^*}{\sqrt{\bw^\top\bSigma^*\bw}}\bigg].
\end{align*}

For the other case where $h_2(z)=-z$ for any $z\in\cZ$, note that $h_1\circ g_{\bw} = h_2\circ g_{-\bw}$ for any $g_{\bw}\in\cG_{\bin}$. Thus, a similar proof technique can be applied to compute the adversarial risk,
\begin{align*}
    \AdvRisk_\epsilon(h_2\circ g_{\bw}) &= \Pr_{Z\sim\cN(0,1)} \bigg[Z \leq \frac{\epsilon \|\bw\|_q + \bw^\top\btheta^*}{\sqrt{\bw^\top\bSigma^*\bw}}\bigg],
\end{align*}
and the adversarial gap,
\begin{align*}
    \AG_\epsilon(h_2\circ g_{\bw}) =
    \Pr_{Z\sim\cN(0,1)} \bigg[\frac{\bw^\top\btheta^*}{\sqrt{\bw^\top\bSigma^*\bw}} \leq Z \leq \frac{ \bw^\top\btheta^* + \epsilon \|\bw\|_q}{\sqrt{\bw^\top\bSigma^*\bw}}\bigg].
\end{align*}

Note that $f^*:\cX\rightarrow\cY$ is the optimal classifier based on $g_{\bw}$ that minimizes the adversarial risk of $h\circ g_{\bw}$ for any $h\in\cH$. By comparing the adversarial risk of $h_1\circ g_{\bw}$ and $h_2\circ g_{\bw}$, we have $f^*=h_1\circ g_{\bw}$, if $\bw^{\top}\btheta^*\geq 0$; $f^*=h_2\circ g_{\bw}$, otherwise. Thus, we derive the adversarial gap with respect to $f^*$ as follows
\begin{equation}
\label{eq:opt adv gap}
\AG_\epsilon(f^*) = 
\left\{
\begin{array} {ll}
\Pr_{Z\sim\cN(0,1)} \Big(\frac{ \bw^\top\btheta^* - \epsilon \|\bw\|_q}{\sqrt{\bw^\top\bSigma^*\bw}} \leq Z \leq \frac{\bw^\top\btheta^*}{\sqrt{\bw^\top\bSigma^*\bw}}\Big), & \quad\text{if} \: \bw^{\top}\btheta^*\geq 0 ;\\
\Pr_{Z\sim\cN(0,1)} \Big(\frac{\bw^\top\btheta^*}{\sqrt{\bw^\top\bSigma^*\bw}} \leq Z \leq \frac{ \bw^\top\btheta^* + \epsilon \|\bw\|_q}{\sqrt{\bw^\top\bSigma^*\bw}}\Big), & \quad\text{otherwise}.
\end{array} 
\right. 
\end{equation}

Based on \eqref{eq:opt adv gap}, we further obtain the following inequality 
$$
\AG_\epsilon(f^*) \leq \Pr_{Z\sim\cN(0,1)} \bigg[   \frac{\bw^\top\btheta^*-\epsilon\cdot\|\bw\|_q}{ \sqrt{\bw^\top\bSigma^*\bw}} \leq Z \leq \frac{ \bw^\top\btheta^*+\epsilon\cdot\|\bw\|_q}{ \sqrt{\bw^\top\bSigma^*\bw}} 
  \bigg] \leq 2\cdot \AG_\epsilon(f^*).
$$
Finally, according to the formulation of representation vulnerability \eqref{eq:feature sensitivity}, we have 
\begin{align*}
    \int_{\frac{1}{2}-\AG_\epsilon(f^*\circ g_{\bw})}^{\frac{1}{2}} \H_2'(\theta)d\theta \leq \RV_\epsilon(g_{\bw}) \leq \int_{\frac{1}{2}-\frac{1}{2}\AG_\epsilon(f^*\circ g_{\bw})}^{\frac{1}{2}} \H_2'(\theta)d\theta,
\end{align*}
which completes the proof.

\end{proof}

\subsection{Proof of Lemma \ref{lem:equal}}
\label{proof:lemma equal}

Lemma~\ref{lem:equal}, restated below, connects adversarial risk and input distribution perturbations bounded in an $\infty$-Wasserstein ball.

{\bf Lemma 3.3.\ }
Let $(\cX,\Delta)$ be the input metric space and $\cY$ be the set of labels. Assume all the examples are generated from a joint probability distribution $(X,Y)\sim\mu_{XY}$. Let $\mu_X$ be the marginal distribution of $X$. Then, for any classifier $f:\cX\rightarrow\cY$ and $\epsilon\geq 0$, we have
\begin{align*}
    \AdvRisk_\epsilon(f) = \sup_{\mu_{X'}\in\cB_{\W_\infty}(\mu_X,\epsilon)}\Pr\big[f(X')\neq Y\big],
\end{align*}
where $X'$ denotes the random variable that follows $\mu_{X'}$.

\begin{proof}
Our proof proves the equality by proving $\le$ inequalities in both directions. First, we prove
\begin{align}
\label{eq:dir 1}
\AdvRisk_\epsilon(f)\leq \sup_{\mu_{X'}\in\cB_{\W_\infty}(\mu_X,\epsilon)}\Pr\big[f(X')\neq Y\big].
\end{align}
For any classifier $f:\cX\rightarrow\cY$, according to Definition \ref{def:adv risk}, we have
\begin{align*}
   \AdvRisk_\epsilon(f) = \Pr_{(\bx,y)\sim\mu_{XY}} \big[\exists\:\bx'\in\cB(\bx,\epsilon) \text{ s.t. } f(\bx')\neq y\big].
\end{align*}
Since $f$ is a given deterministic function, the optimal perturbation scheme that achieves $\AdvRisk_\epsilon(f)$ essentially defines a transport map $T:\cX\rightarrow\cX$. More specifically, let $\cC_y(f) = \{\bx\in\cX: f(\bx)\neq y\}$.  Then, for any sampled pair $(\bx,y)\sim\mu_{XY}$, we can construct $T$ such that
\begin{equation*} 
T(\bx) = 
\left\{
\begin{array} {ll}
\argmin_{\bx'\in\cC_y(f)} \Delta(\bx',\bx), & \quad\text{if} \: \cC_y(f) \cap \cB(\bx,\epsilon) \neq \varnothing ;\\
\bx, & \quad\text{otherwise}.
\end{array} 
\right. 
\end{equation*}
Let $(X,Y)$ be the random variable that follows $\mu_{XY}$. By construction, it can be easily verified that $T_\#(\mu_X)\in\cB_{\W_\infty}(\mu_X,\epsilon)$ and $\AdvRisk_\epsilon(f) = \Pr \big[f(T(X))\neq Y\big]$. Therefore, we have proven \eqref{eq:dir 1}.

It remains to prove the other direction of the inequality:
\begin{align}
\label{eq:dir 2}
   \AdvRisk_\epsilon(f) \geq \sup_{\mu_{X'}\in\cB_{\W_\infty}(\mu_X,\epsilon)}\Pr\big[f(X')\neq Y\big].
\end{align}
According to the alternative definition of $\infty$-Wasserstein distance, the optimal solution $\mu_{X'}^*$ that achieves the supremum of the right hand side of \eqref{eq:dir 2} can be captured by a transport map $T^*:\cX\rightarrow\cX$ such that $\mu^*_{X'} = T^*_{\#}(\mu_X)$ and $\Delta(T^*(X),X)\leq \epsilon$ holds almost surely with respect to the randomness of $X$ and $T^*$. Thus, we have
\begin{align*}
    \Pr\big[f(T^*(X))\neq Y\big] &= \Pr_{(\bx,y)\sim\mu_{XY}}\big[f(T^*(\bx))\neq y\big] \\
    &= \Pr_{(\bx,y)\sim\mu_{XY}}\big[ \Delta(T^*(\bx),\bx)\leq\epsilon \:\text{ and }\: f(T^*(\bx))\neq y\big] \\
    &\leq 1 - \Pr_{(\bx,y)\sim\mu_{XY}} \big[\forall\:\bx'\in\cB(\bx,\epsilon) \text{ s.t. } f(\bx')= y\big] = \AdvRisk_\epsilon(f).
\end{align*}
Therefore, we have proven the second direction and completed the proof.
\end{proof}

\subsection{Proof of Theorem \ref{thm:general lower bound}}
\label{proof:general thm}

Theorem \ref{thm:general lower bound}, restated below, gives a lower bound for the adversarial risk for any downstream classifier in terms of the worst-case mutual information between the representation's input and output distributions. 

{\bf Theorem \ref{thm:general lower bound}.\ }
Let $(\cX,\Delta)$ be the input metric space, $\cY$ be the set of labels and $\mu_{XY}$ be the underlying joint probability distribution. Assume the marginal distribution of labels $\mu_Y$ is a uniform distribution over $\cY$. Consider the feature space $\cZ$ and the set of downstream classifiers $\cH=\{h:\cZ\rightarrow\cY\}$. Given $\epsilon\geq 0$, for any $g:\cX\rightarrow\cZ$, we have
\begin{align*}
\inf_{h\in\cH} \AdvRisk_\epsilon(h\circ g) \geq 1 - \frac{\I(X;Z)-\RV_\epsilon(g) + \log2}{\log|\mathcal{Y}|},
\end{align*}
where $X$ is the random variable that follows the marginal distribution of inputs $\mu_X$ and $Z=g(X)$.

\vspace{2ex}
Before starting the proof, we state two useful lemmas on Markov chains. A Markov chain is defined to be a collection of random variables $\{X_t\}_{t\in\ZZ}$ with the property that given the present, the future is conditionally independent of the past. Namely,
$$
     \Pr(X_t=j|X_0=i_0,X_1=i_1,...,X_{(t-1)}=i_{(t-1)})=\Pr(X_t=j|X_{(t-1)}=i_{(t-1)}). 
$$

\begin{lemma}[Fano's Inequality]
\label{lem:Fano}
Let $X$ be a random variable uniformly distributed over a finite set of outcomes $\cX$. For any estimator $\hat{X}$ such that $X\rightarrow Y\rightarrow \hat{X}$ forms a Markov chain, we have
\begin{align*}
    \Pr(\hat{X}\neq X) \geq 1 - \frac{\I(X;\hat{X})-\log 2}{\log |\cX|}.
\end{align*}
\end{lemma}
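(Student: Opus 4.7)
The plan is to carry out the classical derivation of Fano's inequality. The Markov chain hypothesis $X \to Y \to \hat{X}$ is not strictly needed for the displayed bound itself; it is included because one typically combines Fano with the data-processing inequality $\I(X;\hat{X}) \leq \I(X;Y)$ in downstream applications (as indeed seems to be done when applying this lemma in the proof of Theorem \ref{thm:general lower bound}).

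First, I would introduce the binary error indicator $E = \mathbf{1}[\hat{X} \neq X]$, so that $\Pr(E = 1) = P_e := \Pr(\hat{X} \neq X)$. Applying the chain rule for conditional entropy to $\H(E, X \mid \hat{X})$ in two different orders, and using the fact that $E$ is a deterministic function of $(X, \hat{X})$ (so $\H(E \mid X, \hat{X}) = 0$), I obtain the key identity
\[
\H(X \mid \hat{X}) \;=\; \H(E \mid \hat{X}) + \H(X \mid E, \hat{X}).
\]

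Next, I would bound each summand on the right-hand side. Since conditioning reduces entropy, $\H(E \mid \hat{X}) \leq \H(E) \leq \log 2$. For the second term, I split on the value of $E$: when $E = 0$, the value $X = \hat{X}$ is determined given $\hat{X}$, so $\H(X \mid E = 0, \hat{X}) = 0$; when $E = 1$, $X$ lies in $\cX \setminus \{\hat{X}\}$, so $\H(X \mid E = 1, \hat{X}) \leq \log(|\cX| - 1) \leq \log |\cX|$. Averaging over $E$ gives $\H(X \mid E, \hat{X}) \leq P_e \log |\cX|$, hence $\H(X \mid \hat{X}) \leq \log 2 + P_e \log |\cX|$.

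To conclude, I would invoke the uniformity assumption $\H(X) = \log |\cX|$ together with $\I(X; \hat{X}) = \H(X) - \H(X \mid \hat{X})$, which yields $\log|\cX| - \I(X;\hat{X}) \leq \log 2 + P_e \log|\cX|$, and rearranges to $P_e \geq 1 - (\I(X;\hat{X}) + \log 2)/\log|\cX|$. This matches the $+\log 2$ that appears in the numerator in the statement of Theorem \ref{thm:general lower bound}; the lemma as typeset shows $-\log 2$, but that form would give a bound strictly exceeding $1$ whenever $\I(X;\hat{X}) < \log 2$ (e.g.\ when $\hat{X}$ is independent of $X$), so I read the sign as a typographical slip and the intended statement as the standard Fano form. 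The proof presents no real obstacle; the only subtlety worth flagging is that the Markov-chain hypothesis plays no role in the inequality itself and is only of use when one later applies the data-processing inequality to replace $\I(X;\hat{X})$ by $\I(X;Y)$.
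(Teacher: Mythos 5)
Your proof is correct and is essentially the same argument the paper relies on: the paper does not prove the lemma itself but cites Chapter~2 of \citet{cover2012elements}, whose proof is exactly the classical derivation you give (error indicator $E$, chain rule for $\H(E,X\mid\hat{X})$, the bounds $\H(E\mid\hat{X})\leq\log 2$ and $\H(X\mid E,\hat{X})\leq P_e\log|\cX|$, and uniformity of $X$). Your two side remarks are also accurate: the $-\log 2$ in the lemma's display is a sign slip, since the paper's own application in the proof of Theorem~\ref{thm:general lower bound} uses $\I(\cdot\,;\cdot)+\log 2$ in the numerator, matching the bound you derive, and the Markov-chain hypothesis indeed enters only through the subsequent use of the data-processing inequality.
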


\begin{lemma}[Data-Processing Inequality]
\label{lem:data process}
For any Markov chain $X\rightarrow Y\rightarrow Z$, we have 
$$
\I(X;Y) \geq I(X;Z) \quad \text{and} \quad \I(Y;Z) \geq \I(X;Z).
$$
\end{lemma}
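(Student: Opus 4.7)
The plan is to derive both inequalities from the chain rule for mutual information applied to the joint variable $(Y, Z)$ paired with $X$, together with the vanishing of a conditional mutual information forced by the Markov assumption. Specifically, I would begin by expanding $\I(X; (Y, Z))$ in the two natural orders:
\begin{equation*}
\I(X; (Y, Z)) \;=\; \I(X; Y) + \I(X; Z \mid Y) \;=\; \I(X; Z) + \I(X; Y \mid Z),
\end{equation*}
where $\I(\cdot;\cdot\mid \cdot)$ denotes conditional mutual information in the standard sense (KL-divergence of the conditional joint from the product of conditional marginals, averaged over the conditioning variable).

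The next step is to invoke the Markov condition $X \to Y \to Z$, which by definition states that $X$ and $Z$ are conditionally independent given $Y$. This independence makes the conditional joint equal to the product of conditional marginals almost surely, so $\I(X; Z \mid Y) = 0$. Substituting into the chain-rule identity gives $\I(X; Y) = \I(X; Z) + \I(X; Y \mid Z)$, and since mutual information (conditional or otherwise) is non-negative as a KL-divergence, we conclude $\I(X; Y) \geq \I(X; Z)$.

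For the second inequality $\I(Y; Z) \geq \I(X; Z)$, I would appeal to symmetry. The conditional independence statement $X \perp Z \mid Y$ is symmetric in $X$ and $Z$, so the reversed sequence $Z \to Y \to X$ is also a Markov chain. Applying the inequality just proved to the reversed chain, with the roles of $X$ and $Z$ swapped, yields $\I(Z; Y) \geq \I(Z; X)$; using that mutual information is symmetric in its arguments, this is exactly $\I(Y; Z) \geq \I(X; Z)$.

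This is a textbook identity and no step is genuinely hard. The only care needed is to treat the conditional mutual information rigorously: verifying that the chain-rule decomposition above is valid for the (possibly continuous) random variables of interest, and that the Markov assumption truly implies $\I(X; Z \mid Y) = 0$ rather than a weaker almost-sure statement. Under the standard measure-theoretic formulation used throughout the paper, both points are immediate, so the proof reduces to writing out the two displayed identities and citing non-negativity of (conditional) KL-divergence.
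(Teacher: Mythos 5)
Your proof is correct and matches the argument the paper relies on: the paper does not prove this lemma itself but defers to Chapter 2 of \citet{cover2012elements}, whose standard proof is exactly your chain-rule expansion of $\I(X;(Y,Z))$ in two orders, the vanishing of $\I(X;Z\mid Y)$ under the Markov condition, non-negativity of conditional mutual information, and the symmetry of the Markov property for the second inequality. No gaps.
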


Chapter 2 in \citet{cover2012elements} provides proofs of Lemmas \ref{lem:Fano} and \ref{lem:data process}. 

\begin{proof}[Proof of Theorem \ref{thm:general lower bound}]
For any classifier $h:\cZ\rightarrow\cY$, according to Lemma \ref{lem:equal}, we have
\begin{align}
\label{eq:lemma 3.4}
    \AdvRisk_\epsilon (h\circ g) = \sup_{\mu_{X'}\in\cB_{\W_\infty}(\mu_X,\epsilon)} \Pr \big[ h (g(X')) \neq Y \big].
\end{align}
Let $\mu_{X'}\in\cB_{\W_\infty}(\mu_X,\epsilon)$ be a probability measure over $(\cX,\Delta)$. According to the alternative definition of $\infty$-Wasserstein distance using optimal transport, $\mu_{X'}$ corresponds to a transport map $T:\cX\rightarrow\cX$ such that $\mu_{X'} = T_{\#}(\mu_X)$. Thus, for any given $\mu_{X'}\in\cB_{\W_\infty}(\mu_X,\epsilon)$ and $h\in\cH$, we have the Markov chain
\begin{align*}
    Y\rightarrow & X \xrightarrow{T} X' \xrightarrow{g} g(X') \xrightarrow{h} (h\circ g)(X').
\end{align*}
where $X, Y$ are random variables for input and label distributions respectively. The first Markov chain $Y\rightarrow X$ can be understood as a generative model for generating inputs according to the conditional probability distribution $\mu_{X|Y}$. Therefore, applying Lemmas \ref{lem:Fano} and \ref{lem:data process}, we obtain the inequality,
\begin{align}
\label{eq:fano adv risk}
    \Pr \big[ h(g(X')) \neq Y \big] \geq 1 - \frac{\I\big(Y; (h\circ g)(X')\big) + \log2}{\log|\mathcal{Y}|} \geq 1 - \frac{\I\big(X'; g(X')\big) + \log2}{\log|\mathcal{Y}|}.
\end{align}
Taking the supremum over the distribution of $X'$ in $\cB_{\W_\infty}(\mu_X,\epsilon)$ and infimum over $h\in\cH$ on both sides of \eqref{eq:fano adv risk} yields
\begin{align*}
    \inf_{h\in\cH} \big[ \AdvRisk_\epsilon(h\circ g) \big] &= \inf_{h\in\cH}\: \sup_{\mu_{X'}\in\cB_{\W_\infty}(\mu_X,\epsilon)}\: \Pr \big[ h (g(X')) \neq Y \big] \\ 
    &\geq 1 - \frac{\inf_{\mu_{X'}\in\cB_{\W_\infty}(\mu_X,\epsilon)}\I\big(X';  g(X')\big) + \log 2}{\log|\mathcal{Y}|} \\
    &= 1 - \frac{\I(X;g(X))-\RV_\epsilon(g) + \log2}{\log|\mathcal{Y}|},
\end{align*}
where the first equality is due to \eqref{eq:lemma 3.4} and the inequality holds because of \eqref{eq:fano adv risk}. Thus, we completed the proof.
\end{proof}

\section{Algorithm for Estimating the Worst-Case Mutual Information}
\label{sec:alg code}

This section presents the pseudocode of our heuristic algorithm for solving the empirical estimation problem \eqref{eq:worst case MI}. More specifically, given a training sample set $\cS_{\text{train}}$, our algorithm alternatively optimizes for the worst-case input perturbations using projected gradient descent (Algorithm \ref{alg:worst-case perturbation}) and conducts gradient ascent for the network parameters $\theta$ (training phase in Algorithm \ref{alg:estimator training}). Based on the best parameter $\theta_{\text{opt}}$ selected from the training phase, our algorithm then estimates the worst-case mutual information with respect to the given representation $g$ using a testing sample set $\cS_{\text{test}}$ (testing phase in Algorithm \ref{alg:estimator training}).
Since we only have assess to a finite set of data sampled from $\mu_X$, we use an additional testing phase in Algorithm \ref{alg:estimator training} to minimize the overfitting effect of
the training samples on the optimal network parameter $\theta_{\text{opt}}$ for mutual information neural estimation (MINE).

Moreover, we adopt the negative sampling scheme \citep{hjelm2018learning} to estimate the expectation term with respect to $\hat\mu_X^{(m)}\otimes \hat\mu_Z^{(m)}$ in mutual information neural estimation for better performance. Here, the pairing scheme defines a correspondence from each input to a set of inputs for a given sample set. To be more specific, given a set of samples $\{\bx_i\}_{i\in[B]}$, a pairing scheme with negative sampling size $N\leq B$ corresponds to a set of vectors $\{\bpi_i\}_{i\in[B]}$ such that each $\bpi_i$ is a randomly selected subset from $\{1,2,\ldots,B\}$ with size $N$, and 
$\pi_{ij}$ denotes the $j$-th element of $\bpi_i$.
Compared with the algorithm in \citet{hjelm2018learning} for estimating standard mutual information, Algorithm 2 requires additional $B\cdot S$ steps of forward and backward propagations with respect to the input for finding the worst-case input perturbations in each iteration.

\begin{algorithm}[thb]
\caption{Heuristic Search for Worst-Case Input Perturbations}
\label{alg:worst-case perturbation}
\textbf{Input:} samples $\{\bx_i\}_{i\in[B]}$, representation $g$, MINE estimator $T_\theta$, paring scheme $\{\bpi_i\}_{i\in[B]}$, perturbation strength $\epsilon$ in $\ell_p$ \par
\textbf{Hyperparameters:} negative sampling size $N$, number of iterations $S$, step size $\eta_a$ \par
\begin{algorithmic}[1] 
\STATE Initialize $\{\bx'_i\}_{i\in[B]} \leftarrow \{\bx_i\}_{i\in[B]}$
\FOR{$s=1,2,\ldots,S$}
\STATE $J(\bx'_1,\ldots,\bx'_B,\theta) \leftarrow \frac{1}{B}\sum_{i=1}^{B}T_\theta\big(\bx'_i,g(\bx'_i)\big)-\log\big(\frac{1}{BN}\sum_{i=1}^{B}\sum_{j=1}^{N}\exp\big[{T_\theta\big(\bx'_i,g(\bx'_{\pi_{ij}})\big)}\big]\big)$
\FOR{$i=1,2,\ldots,B$}
\STATE $\bx'_i \leftarrow \cP_{\cB(\bx_i, \epsilon)} \big[ \bx'_i - \eta_a \cdot \nabla _{\bx'_i} J(\bx'_1,\ldots,\bx'_B,\theta) \big]$ \algorithmiccomment{\qquad //\quad $\cP_{\cB(\bx_i, \epsilon)}$ denotes the projection operator onto $\cB(\bx_i, \epsilon)$}
\ENDFOR
\ENDFOR
\STATE $V_1\leftarrow J(\bx'_1,\ldots,\bx'_B,\theta)$
\STATE $V_2\leftarrow \nabla_\theta J(\bx'_1,\ldots,\bx'_B,\theta)$
\end{algorithmic}
\textbf{Output:}  $\{V_1, V_2\}$
\end{algorithm}

\begin{algorithm}[thb]
\caption{Empirical Estimation of Worst-Case Mutual Information}
\label{alg:estimator training}
\textbf{Input:} training and testing sample sets $(\cS_{\text{train}}, \cS_{\text{test}})$ sampled from $\mu_X$, representation $g$, perturbation strength $\epsilon$ in $\ell_p$ \par
\textbf{Hyperparameters:} number of training epochs $T$, step size $\eta_e$, number of testing mini-batches $K$ \par
\begin{algorithmic}[1] 
\STATE\algorithmiccomment{\textbf{//\: Training Phase}} \\
\STATE  $\theta_1\leftarrow$ initialize network parameter for MINE estimator
\FOR{$t=1,2,\ldots,T$}    
\STATE $\{\bx_i\}_{i\in[B]},\{\bpi_i\}_{i\in[B]}\leftarrow$ randomly generate a batch of $B$ training samples and a pairing scheme
\STATE $\{V_1(t), V_2(t)\} \leftarrow$ Algorithm~\ref{alg:worst-case perturbation}$\big(\{\bx_i\}_{i\in[B]},g,T_{\theta_t},\{\bpi_i\}_{i\in[B]},\epsilon\big)$
\STATE $\theta_{t+1} \leftarrow \theta_t + \eta_e\cdot V_2(t)$
\ENDFOR
\STATE $\theta_{\text{opt}}\leftarrow\argmax\big\{t\in[T]:V_1(t)\big\}$  \algorithmiccomment{\qquad //\quad choose the best parameter $\theta_{\text{opt}}$ based on history}
\STATE\algorithmiccomment{\textbf{//\: Testing Phase}} \\
\STATE Randomly split the testing set $\cS_{\text{test}}$ into $K$ mini-batches
$\{\cS_1,\ldots,\cS_K\}$ with equal size
\FOR{$k=1,2,\ldots,K$}
\STATE $\{\bpi(\bx)\}_{\bx\in\cS_k}\leftarrow$ randomly generate a pairing scheme with respect to $\cS_k$
\STATE $\{V_1(k), V_2(k)\} \leftarrow$ Algorithm~\ref{alg:worst-case perturbation}$\big(\cS_k,g,T_{\theta_{\text{opt}}}, \{\bpi(\bx)\}_{\bx\in\cS_k}, \epsilon\big)$
\ENDFOR
\STATE $\hat{I}_{\text{worst}} \leftarrow\frac{1}{K}\sum_{k=1}^K V_1(k)$ 
\end{algorithmic}
\textbf{Output:} $\hat{I}_{\text{worst}}$
\end{algorithm}

\section{Worst-case Mutual Information for Individual Neuron Features}
\label{section:appendix:tensorization}

The following tensorization inequality \citep{scarlett2019introductory}
characterizes the connection between the mutual information of individual neuron features and that of the whole representation. According to Theorem \ref{thm:general lower bound}, such connection suggests the necessity of enough worst-case mutual information for each individual neuron.
\newpage

\begin{lemma}[Tensorization of Mutual Information]
Let $\bZ=(Z_1, ..., Z_n)$ be a product distributions over random variables. If $Z_1$, ..., $Z_n$ are mutually independent conditioned on $X$, then
\begin{align*}
I(X;\bZ) \leq \sum_{i=1}^{n} I(X;Z_i) 
\end{align*}
\end{lemma}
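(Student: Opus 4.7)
The plan is to use the identity $I(X;\bZ) = H(\bZ) - H(\bZ\mid X)$ and separately upper-bound the first term while evaluating the second exactly. This splits the problem into a purely entropic question about $\bZ$ (which uses no assumption) and a conditional-entropy calculation (which is where the conditional-independence hypothesis enters).

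First I would invoke subadditivity of Shannon entropy, $H(\bZ) = H(Z_1, \ldots, Z_n) \leq \sum_{i=1}^n H(Z_i)$, which follows from iterating the chain rule $H(Z_1,\ldots,Z_n) = \sum_i H(Z_i \mid Z_1,\ldots,Z_{i-1})$ and then dropping the conditioning in each term using the standard fact that conditioning never increases entropy. Next I would handle the conditional entropy $H(\bZ \mid X)$. By the chain rule for conditional entropy, $H(\bZ \mid X) = \sum_{i=1}^n H(Z_i \mid Z_1,\ldots,Z_{i-1}, X)$, and the hypothesis that $Z_1,\ldots,Z_n$ are mutually independent given $X$ collapses each summand to $H(Z_i \mid X)$, yielding the exact equality $H(\bZ \mid X) = \sum_{i=1}^n H(Z_i \mid X)$.

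Combining the two pieces gives
\begin{equation*}
I(X;\bZ) = H(\bZ) - H(\bZ \mid X) \leq \sum_{i=1}^n H(Z_i) - \sum_{i=1}^n H(Z_i \mid X) = \sum_{i=1}^n I(X; Z_i),
\end{equation*}
which is the claimed bound. For continuous $Z_i$ the same argument goes through with differential entropies, since subadditivity and the conditional chain rule hold in that setting too (and any measure-theoretic issues can be sidestepped by expressing everything via KL divergence, writing $I(X;\bZ) = D_{\mathrm{KL}}(p_{X,\bZ} \Vert p_X \otimes p_{\bZ})$ and exploiting the product form of $p_{\bZ\mid X}$).

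There is no serious obstacle here; the only substantive ingredients are subadditivity of entropy and the conditional independence assumption, and both enter in a completely transparent way. The one item worth stating carefully is the direction of the inequality: subadditivity gives $H(\bZ) \leq \sum_i H(Z_i)$, while the conditional independence gives \emph{equality} $H(\bZ\mid X) = \sum_i H(Z_i\mid X)$, and it is precisely this asymmetry between the marginal and conditional sides that produces the single inequality in the lemma rather than an equality.
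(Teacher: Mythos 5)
Your proof is correct, and it is the standard argument for this tensorization inequality: decompose $I(X;\bZ)=H(\bZ)-H(\bZ\mid X)$, bound the joint entropy by subadditivity, and use the conditional-independence hypothesis to get exact additivity of $H(\bZ\mid X)$. The paper itself does not prove this lemma --- it cites it from an external reference --- so there is nothing to contrast with; your write-up (including the remark that the inequality comes only from the marginal side while the conditional side is an equality, and the KL-divergence fallback for the continuous case) is a complete and accurate proof of the stated claim.
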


Suppose neurons within a single layer have no interconnection, then each neuron's output is mutually independent conditioned on the model input. 
If a perturbation imposed on the input distribution makes the perturbed mutual information $I(X';Z'_i)$ relatively low for each neuron, then the perturbed mutual information with respect to the entire layer $I(X';\bZ')$ will also be low, which further implies a low adversarial accuracy for any downstream classifier based on Theorem \ref{thm:general lower bound}.

\newpage
\section{Experiments}
\label{section:appendix:experiment}

\subsection{Implementation Details}
\label{section:appendix implement}

Here, we provide additional implementation details of our experiments presented in Section~\ref{sec:experiments}.

\shortsection{Model architectures} For all experiments, we follow \citet{hjelm2018learning} in implementing the MINE estimator. We adopt the \textit{encode-and-dot-product} model architecture in \citet{hjelm2018learning} which maps $\bx$ and $\bz$ respectively to two high-dimensional vectors and then takes the dot-product to calculate the output. 
The basic modules used in our experiments are listed in Table \ref{tab:modules}. A slight difference in training the feature (encoder) is that \citet{hjelm2018learning} shares parameters between parts of the mutual information estimator and the encoder, while we separate the two parts completely to be consistent with our mutual information estimation experiments.

\begin{table}[!hp]
\centering
\begin{tabular}{@{}ll@{}}
\toprule
\multicolumn{1}{c|}{Module} & \multicolumn{1}{c}{Structure} \\ \midrule
\multicolumn{1}{l|}{Feature Extractor} & 
$\text{Conv}(64,4\times4, 2) \rightarrow
\text{Conv}(128,4\times4, 2) \rightarrow
\text{Conv}(256,4\times4, 2) \rightarrow
\text{FC}(1024) \rightarrow
\text{FC}(64)$ \\
\multicolumn{1}{l|}{Top Classifier (MLP)} & 
$\text{FC}(200) \rightarrow
\text{FC}(10)$ \\
\multicolumn{1}{l|}{Top Classifier (Linear)} & 
$\text{FC}(10)$ \\
\multicolumn{1}{l|}{Baseline-H} & Feature Extractor $\rightarrow$ Top Classifier (MLP) \\  \midrule
\multicolumn{1}{l|}{Estimator Part 1} & $\text{Conv}(64,4\times4, 2) \rightarrow
\text{Conv}(128,4\times4, 2) \rightarrow
\text{Conv}(256,4\times4, 2)$ \\
\multicolumn{1}{l|}{Estimator Part 2} & $\text{Conv}(2048,1\times1, 1) \rightarrow
\text{Conv}(2048,1\times1, 1)$ \\
\multicolumn{1}{l|}{Estimator} & 
($\bx$ $\rightarrow$ Estimator Part 1 $\rightarrow$ Estimator Part 2) $\cdot$
($\bz$ $\rightarrow$ Estimator Part 2)
\\
\bottomrule
\end{tabular}
\caption{Basic model structures used in our experiments. Batch-normalization and ReLU activation are used between layers (not including the output of each module). Shortcut-connection is omitted for Estimator Part 2. For scalar feature $z$, Estimator Part 2 is replaced by an identity mapping. Average operation is needed in the dot-product operation of Estimator. For more details, see \citet{hjelm2018learning}}
\label{tab:modules}
\end{table}

\shortsection{Hyperparameters} We use simple hyperparameter settings to control their effect on our various ablation experiments. We use $l_\infty$ constrained perturbations and PGD attack \citep{madry2017towards} on all datasets.
For CIFAR-10, we set the radius $\epsilon=8/255$ and use $7$ attack steps with step size $0.01$. For MNIST, we set the radius $\epsilon=0.3$ and use $10$ attack steps with step size $0.1$. For Fashion-MNIST, we set the radius $\epsilon=0.1$ and use $10$ attack steps with step size $0.02$. For SVHN, we set the radius $\epsilon=4/255$ and use $10$ attack steps with step size $0.005$.
The batch size is set as $128$ for both datasets, and our results are consistent with different batch sizes between $128$ to $512$ (we did not test other sizes). A total training epochs of $200$ is set for VGG, ResNet, and DenseNet, with an initial learning rate of $0.1$ which decays by a factor of $10$ every $50$ epochs. For the Baseline-H model and the similar mutual information estimator, we set the training epoch to $300$ and use a fixed learning rate of $0.0001$ as in \citet{hjelm2018learning}.

\subsection{Additional Results} \label{section:appendix:additional results}

\shortsection{Results for MNIST, Fashion-MNIST, and SVHN}
We present the downstream classification results for MNIST, Fashion-MNIST, and SVHN in Table~\ref{tab:main:mnist}, \ref{tab:main:fashion-mnist}, \ref{tab:main:svhn}. These results support similar conclusions as those drawn from CIFAR-10 dataset in Table~\ref{tab:main:cifar}. That is, our training principle always produces representations that have significantly better adversarial accuracy for downstream adversarial classification. In many cases, our training principle also produces representations that have better natural accuracy, despite the worst-case situation that our training principle considers.

\begin{table*}[!htb]
\centering
\begin{tabular}{@{}llSSSS@{}}
\toprule
 &  & \multicolumn{2}{c}{MLP $h$} & \multicolumn{2}{c}{Linear $h$} \\ 
\multicolumn{1}{c}{\text{Representation ($g$)}} & \multicolumn{1}{c}{\text{Classifier ($h$)}} & \multicolumn{1}{c}{\text{Natural}} & \multicolumn{1}{c}{\text{Adversarial}} & \multicolumn{1}{c}{\text{Natural}} & \multicolumn{1}{c}{\text{Adversarial}} \\ \midrule
\multicolumn{1}{c}{\citet{hjelm2018learning}} & \multicolumn{1}{c}{Standard} & \bfseries 96.96 \pm 0.35 & 0.00 \pm 0.00 & \bfseries 84.88 \pm 1.11 & 0.00 \pm 0.00 \\
\multicolumn{1}{c}{\citet{hjelm2018learning}} & \multicolumn{1}{c}{Robust} & 44.99 \pm 14.49 & 16.70 \pm 2.22 & 11.35 \pm 0.00 & 11.35 \pm 0.00 \\
\multicolumn{1}{c}{Ours} & \multicolumn{1}{c}{Standard} & 96.67 \pm 0.12 & 9.97 \pm 1.88 & 82.10 \pm 0.37 & 3.69 \pm 0.55 \\
\multicolumn{1}{c}{Ours} & \multicolumn{1}{c}{Standard (E.S.)} & 94.68 \pm 0.93 & 12.79 \pm 2.24 & 77.72 \pm 2.47 & 4.73 \pm 1.29 \\
\multicolumn{1}{c}{Ours} & \multicolumn{1}{c}{Robust} & 95.05 \pm 0.19 & \bfseries 60.64 \pm 1.82 & 73.99 \pm 1.16 & \bfseries 30.55 \pm 1.34 \\ \midrule
\multicolumn{2}{c}{Fully-Supervised Standard} & 99.13 \pm 0.23 & 0.45 \pm 0.33 & 99.13 \pm 0.04 & 0.00 \pm 0.00 \\
\multicolumn{2}{c}{Fully-Supervised Robust} & 99.25 \pm 0.05 & 95.73 \pm 0.09 & 99.21 \pm 0.06 & 95.29 \pm 0.18 \\ \bottomrule
\end{tabular}
\caption{Comparisons of different representation learning methods for downstream classification on MNIST. \textit{E.S.} denotes early stopping under the criterion of the best adversarial accuracy. We present the mean accuracy and the standard deviation over $4$ repeated trials. }
\label{tab:main:mnist}
\end{table*}

\begin{table*}[!htb]
\centering
\begin{tabular}{@{}llSSSS@{}}
\toprule
 &  & \multicolumn{2}{c}{MLP $h$} & \multicolumn{2}{c}{Linear $h$} \\ 
\multicolumn{1}{c}{\text{Representation ($g$)}} & \multicolumn{1}{c}{\text{Classifier ($h$)}} & \multicolumn{1}{c}{\text{Natural}} & \multicolumn{1}{c}{\text{Adversarial}} & \multicolumn{1}{c}{\text{Natural}} & \multicolumn{1}{c}{\text{Adversarial}} \\ \midrule
\multicolumn{1}{c}{\citet{hjelm2018learning}} & \multicolumn{1}{c}{Standard} & 89.58 \pm 0.13 & 0.00 \pm 0.00 & 85.93 \pm 0.26 & 0.00 \pm 0.00 \\
\multicolumn{1}{c}{\citet{hjelm2018learning}} & \multicolumn{1}{c}{Robust} & 48.61 \pm 4.96 & 14.95 \pm 0.79 & 10.00 \pm 0.00 & 10.00 \pm 0.00 \\
\multicolumn{1}{c}{Ours} & \multicolumn{1}{c}{Standard} & \bfseries 90.45 \pm 0.19 & 5.38 \pm 1.00 & \bfseries 87.37 \pm 0.10 & 18.20 \pm 2.87 \\
\multicolumn{1}{c}{Ours} & \multicolumn{1}{c}{Standard (E.S.)} & 81.66 \pm 0.18 & 29.71 \pm 2.00 & 86.27 \pm 0.64 & 23.40 \pm 2.65 \\
\multicolumn{1}{c}{Ours} & \multicolumn{1}{c}{Robust} & 84.31 \pm 0.29 & \bfseries 70.44 \pm 3.62 & 81.05 \pm 0.30 & \bfseries 61.33 \pm 0.49 \\ \midrule
\multicolumn{2}{c}{Fully-Supervised Standard} & 92.09 \pm 0.23 & 0.00 \pm 0.00 & 85.93 \pm 0.26 & 0.00 \pm 0.00 \\
\multicolumn{2}{c}{Fully-Supervised Robust} & 87.94 \pm 0.18 & 77.59 \pm 0.38 & 88.05 \pm 0.46 & 77.15 \pm 0.24 \\ \bottomrule
\end{tabular}
\caption{Comparisons of different representation learning methods for downstream classification on Fashion-MNIST. \textit{E.S.} denotes early stopping under the criterion of the best adversarial accuracy. We present the mean accuracy and the standard deviation over $4$ repeated trials. }
\label{tab:main:fashion-mnist}
\end{table*}

\begin{table*}[!htb]
\centering
\begin{tabular}{@{}llSSSS@{}}
\toprule
 &  & \multicolumn{2}{c}{MLP $h$} & \multicolumn{2}{c}{Linear $h$} \\ 
\multicolumn{1}{c}{\text{Representation ($g$)}} & \multicolumn{1}{c}{\text{Classifier ($h$)}} & \multicolumn{1}{c}{\text{Natural}} & \multicolumn{1}{c}{\text{Adversarial}} & \multicolumn{1}{c}{\text{Natural}} & \multicolumn{1}{c}{\text{Adversarial}} \\ \midrule
\multicolumn{1}{c}{\citet{hjelm2018learning}} & \multicolumn{1}{c}{Standard} & 50.15 \pm 0.89 & 0.00 \pm 0.00 & 38.94 \pm 1.52 & 0.00 \pm 0.00 \\
\multicolumn{1}{c}{\citet{hjelm2018learning}} & \multicolumn{1}{c}{Robust} & 19.59 \pm 0.00 & 19.59 \pm 0.00 & 19.59 \pm 0.00 & 19.59 \pm 0.00 \\
\multicolumn{1}{c}{Ours} & \multicolumn{1}{c}{Standard} & \bfseries 74.32 \pm 0.49 & 26.29 \pm 1.41 & \bfseries 58.37 \pm 0.54 & 21.62 \pm 0.91 \\
\multicolumn{1}{c}{Ours} & \multicolumn{1}{c}{Standard (E.S.)} & 71.85 \pm 0.59 & 29.59 \pm 0.83 & 54.76 \pm 0.86 & 25.00 \pm 0.53 \\
\multicolumn{1}{c}{Ours} & \multicolumn{1}{c}{Robust} & 68.25 \pm 0.83 & \bfseries 40.23 \pm 0.83 & 49.04 \pm 0.79 & \bfseries 30.56 \pm 0.38 \\ \midrule
\multicolumn{2}{c}{Fully-Supervised Standard} & 91.97 \pm 0.13 & 9.77 \pm 1.58 & 91.33 \pm 0.15 & 9.29 \pm 1.73 \\
\multicolumn{2}{c}{Fully-Supervised Robust} & 90.14 \pm 0.83 & 65.35 \pm 0.44 & 89.60 \pm 0.54 & 64.48 \pm 1.06 \\ \bottomrule
\end{tabular}
\caption{Comparisons of different representation learning methods for downstream classification on SVHN. \textit{E.S.} denotes early stopping under the criterion of the best adversarial accuracy. We present the mean accuracy and the standard deviation over $4$ repeated trials. }
\label{tab:main:svhn}
\end{table*}

\shortsection{Saliency maps of internal features}
In section~\ref{section:exp1}, we evaluated the internal feature vulnerability of all the convolutional kernels in the second layer of Baseline-H. Here, we further visualize the saliency maps of the those internal features to evaluate the underlying correlations. As shown in Figure~\ref{fig:saliencyKernel}, features in robust model have less noisy saliency maps, which is consistent with the observations of lower representation vulnerability shown in Figure~\ref{fig:neuron_historgram}.

\shortsection{Saliency maps of learned representations}
More comparison results of saliency maps are given in Figure~\ref{fig:saliency:additional}. The saliency maps of representations learned using our unsupervised training method shows comparable interpretability results to the models learned using fully-supervised adversarial training. Saliency maps computed by different losses also show consistent interpretability results. This indicates that our training principle indeed produces adversarially robust representations.

\newpage
\null
\newpage

\begin{multicols}{2}
\begin{figure}[H]
    \centering
    \includegraphics[width=0.9\columnwidth]{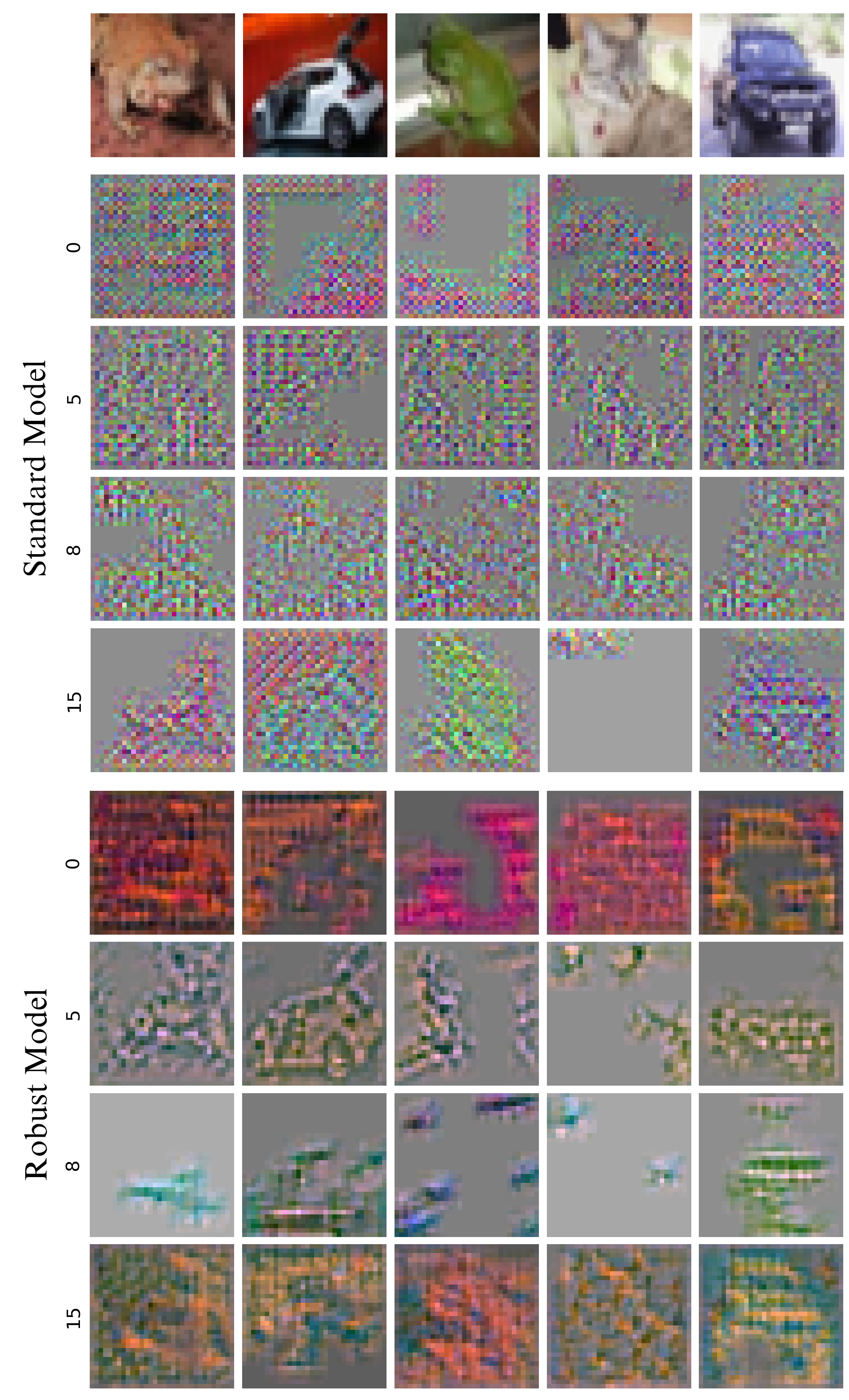}
    \caption{Saliency maps of four arbitrarily selected features in the second convolutional layer of Baseline-H. The feature saliency map is computed by the gradient of a kernel's averaged activation over a input image. Each row presents saliency maps of a specific convolutional kernel.}
    \label{fig:saliencyKernel}
\end{figure}

\begin{figure}[H]
    \centering
    \includegraphics[width=0.9\columnwidth]{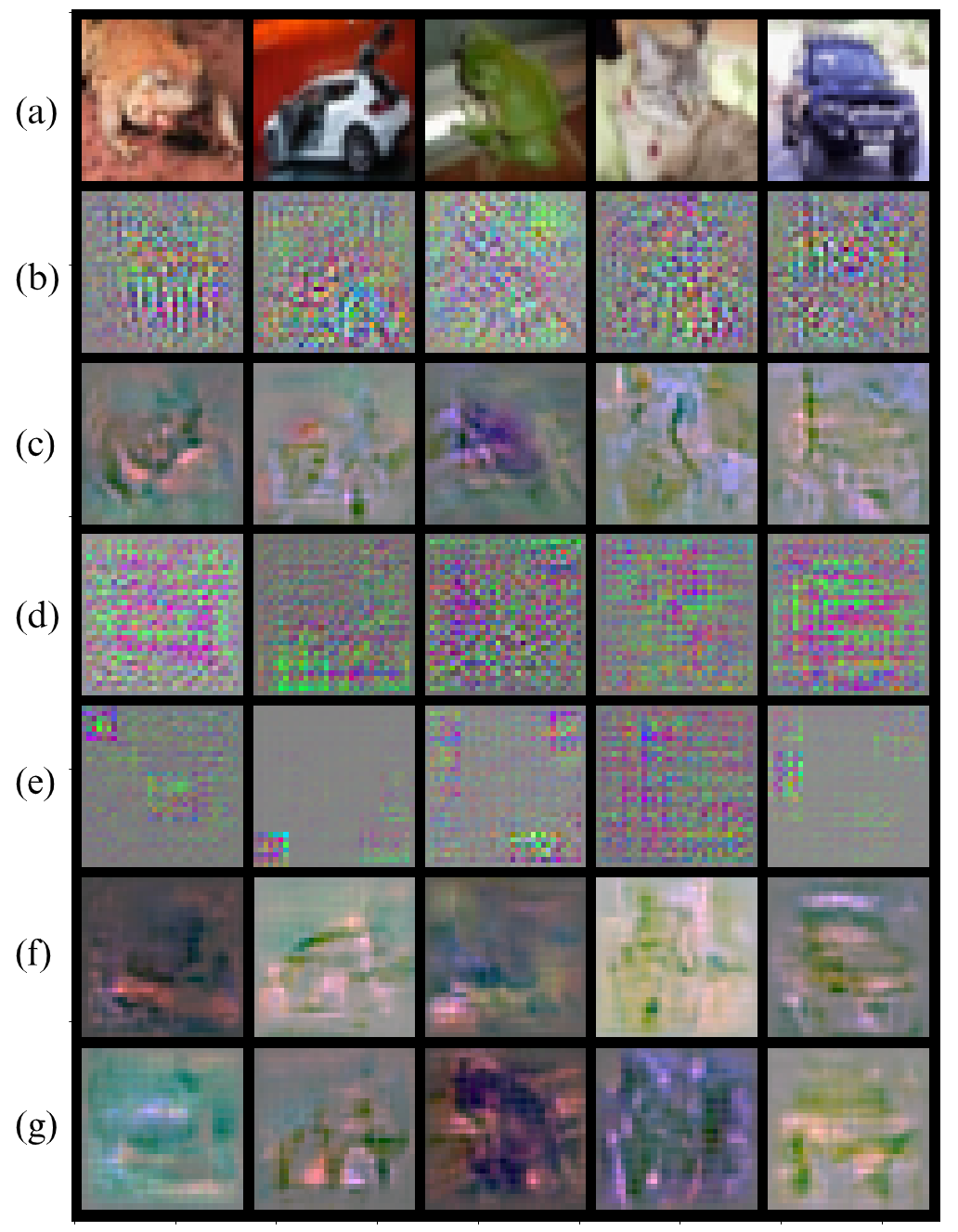}
    \caption{Saliency maps of different models on CIFAR-10: (a) original images (b) fully-supervised standard model (c) fully-supervised robust model (d) representations learned using \citet{hjelm2018learning} with cross-entropy loss (e) representations learned using \citet{hjelm2018learning} with mutual information maximization loss (f) representations learned using our method with cross-entropy loss (g) representations learned using our method with mutual information maximization loss.}
    \label{fig:saliency:additional}
\end{figure}
\end{multicols}

\end{document}